\newcommand\para[1]{\noindent{\bf #1.}~~}
\newcommand{\clip}{\mathop{\mathrm{clip}}}
\title{Marginal Policy Gradients: A Unified Family of Estimators for Bounded Action Spaces with Applications}
\renewcommand\footnotemark{}
\author[1,*,\dag,\ddag]{\textbf{Carson~Eisenach}\thanks{\textsuperscript{*}These authors contributed equally.}\thanks{\textsuperscript{\dag}This work was done while at the Tencent AI Lab, Bellevue, WA 98004.}\thanks{\textsuperscript{\ddag} Correspondence to: {\tt eisenach@princeton.edu}.}}
\author[2,*,\dag]{\textbf{Haichuan~Yang}}
\author[2,3,\dag]{\textbf{Ji~Liu}}
\author[4,\dag]{\textbf{Han~Liu}}
\affil[1]{Department of Operations Research and Financial Engineering, Princeton University, \authorcr ~~Princeton, NJ 08544.}
\affil[2]{Department of Computer Science, University of Rochester, Rochester, NY 14627.}
\affil[3]{Kwai AI Lab at Seattle, Seattle, WA.}
\affil[4]{Department of Electrical Engineering and Computer Science, Northwestern University, \authorcr ~~Evanston, IL 60208.}
\begin{document}

\maketitle

\begin{abstract}
Many complex domains, such as robotics control and real-time strategy (RTS) games, require an agent to learn a continuous control. In the former, an agent learns a policy over $\RR^d$ and in the latter, over a discrete set of actions each of which is parametrized by a {\it continuous} parameter. Such problems are naturally solved using policy based reinforcement learning (RL) methods, but unfortunately these often suffer from high variance leading to instability and slow convergence. Unnecessary variance is introduced whenever policies over bounded action spaces are modeled using distributions with unbounded support by applying a transformation $T$ to the sampled action before execution in the environment.
Recently, the variance reduced clipped action policy gradient (CAPG) was introduced for actions in bounded intervals, but to date no variance reduced methods exist when the action is a direction, something often seen in RTS games.
 To this end we introduce the angular policy gradient (APG), a stochastic policy gradient method for {\it directional control}. With the {\it marginal policy gradients} family of estimators we present a unified analysis of the variance reduction properties of APG and CAPG; our results provide a stronger guarantee than existing analyses for CAPG. Experimental results on a popular RTS game and a navigation task  show that the APG estimator offers a substantial improvement over the standard policy gradient.
\end{abstract}

\section{Introduction}
Recent work in deep reinforcement learning (RL) has achieved human level-control for complex tasks like Atari 2600 games and the ancient game of Go. \citet{Mnih2015} show that it is possible to learn to play Atari 2600 games using end to end reinforcement learning. Other authors \citep{Silver2014} derive algorithms tailored to continuous action spaces, such as appear in problems of robotics control. Today, solving RTS games is a major open problem in RL \citep{Foerster2016,Usunier2017,Vinyals2017}; these are more challenging than previously solved game domains because the action and state spaces are far larger. In RTS games, actions are no longer chosen from a relatively small discrete action set as in other game types. Neither is the objective solely learning a continuous control. Instead the action space typically consists of many discrete actions each of which has a continuous parameter. For example, a discrete action in an RTS game might be moving the player controlled by the agent with a parameter specifying the movement direction. Because the agent must learn a continuous parameter for each discrete action, a policy gradient method is a natural approach to an RTS game. Unfortunately, obtaining stable, sample-efficient performance from policy gradients remains a key challenge in model-free RL.

Just as robotics control tasks often have actions restricted to a bounded interval, Multi-player Online Battle Arena (MOBA) games, an RTS sub-genre, often have actions restricted to the unit sphere which specify a direction (e.g. to move or attack). The current practice, despite most continuous control problems having bounded action spaces, is to use a Gaussian distribution to model the policy and then apply a transformation $T$ to the action $a$ before execution in the environment. This support mismatch between the {\it sampling action distribution} (i.e. the policy $\pi$), and the {\it effective action distribution} can both introduce bias to and increase the variance of policy gradient estimates \citep{Chou2017,Fujita2018}. For an illustration of how the distribution over actions $a$ is transformed under $T(a) = a/||a||$, see Figure~\ref{fig:transform_viz} in Section \ref{sec:angular_pgrad}.

In this paper, motivated by an application to a MOBA game, we study policy gradient methods in the context of directional actions, something unexplored in the RL literature. Like CAPG for actions in an interval $[\alpha,\beta]$, our proposed algorithm, termed {\it angular policy gradient} (APG), uses a variance-reduced, unbiased estimated of the true policy gradient. Since the key step in APG is an update based on an estimate of the policy gradient, it can easily be combined with other state-of-the art methodology including value function approximation and generalized advantage estimation \citep{Sutton2000,Schulman2016}, as well as used in policy optimization algorithms like TRPO, A3C, and PPO \citep{Schulman2015, Mnih2016,Schulman2017}.

Beyond new methodology, we also introduce the {\it marginal policy gradients} (MPG) family of estimators; this general class of estimators contains both APG and CAPG, and we present a unified analysis of the variance reduction properties of all such methods. Because marginal policy gradient methods have already been shown to provide substantial benefits for clipped actions \citep{Fujita2018}, our experimental work focuses only on angular actions; we use a marginal policy gradient method to learn a policy for the 1 vs. 1 map of the {\it King of Glory} game and the {\it Platform2D-v1} navigation task, demonstrating improvement over several baseline policy gradient approaches.

\subsection{Related Work}
\para{Model-Free RL}
Policy based methods are appealing because unlike value based methods they can support learning policies over discrete, continuous and parametrized action spaces. It has long been recognized that policy gradient methods suffer from high variance, hence the introduction of trust region methods like TRPO and PPO \citep{Schulman2015,Schulman2017}. \citet{Mnih2016} leverage the independence of asynchronous updating to improve stability in actor-critic methods. See \citet{Sutton2018} for a general survey of reinforcement learning algorithms, including policy based and actor-critic methods. Recent works have applied policy gradient methods to parametrized action spaces in order to teach an agent to play RoboCup soccer \citep{Hausknecht2016,Masson2016}. Formally, a parametrized action space $\cA$ over $K$ discrete, parametrized actions is defined as $\cA := \bigcup_k \{(k,\omega) : \omega \in \Omega_k \}$, where $k \in [K]$ and $\Omega_k$ is the parameter space for the $k^{th}$ action. See Appendix \ref{sec:more_theory:pgrad} for rigorous discussion of the construction of a distribution over parametrized action spaces and the corresponding policy gradient algorithms.

\para{Bounded Action Spaces}
Though the action space for many problems is bounded, it is nonetheless common to model a continuous action using the multivariate Gaussian, which has unbounded support \citep{Hausknecht2016,Florensa2017,Finn2017}.  Until recently, the method for dealing with this type of action space was to sample according to a Gaussian policy and then either (1) allow the environment to clip the action and update according to the unclipped action or (2) clip the action and update according to the clipped action \citep{Chou2017}. The first approach suffers from unnecessarily high variance, and the second approach is off-policy.

Recent work considers variance reduction when actions are clipped to a bounded interval \citep{Chou2017,Fujita2018}. Depending upon the way in which the $Q$-function is modeled, clipping has also been shown to introduce bias \citep{Chou2017}. Previous approaches are not applicable to the case when $T$ is the projection onto the unit sphere; in the case of clipped actions, unlike previous work, we do not require that each component of the action is independent and obtain much stronger variance reduction results. Concurrent work \citep{Fellows2018} also considers angular actions, but their method cannot be used as a drop in replacement in state of the art methods and the a special form of the critic $q_\pi$ is required.

\para{Integrated Policy Gradients}
Several recent works have considered, as we do, exploiting an integrated form of policy gradient \citep{Ciosek2018,Asadi2017,Fujita2018,Tamar2012}. \citet{Ciosek2018} introduces a unified theory of policy gradients, which subsumes both deterministic \citep{Silver2014} and stochastic policy gradients \citep{Sutton2000}. They characterize the distinction between different policy gradient methods as a choice of quadrature for the expectation. Their Expected Policy Gradient algorithm uses a new way of estimating the expectation for stochastic policies. They prove that the estimator has lower variance than stochastic policy gradients. \citet{Asadi2017} propose a similar method, but lack theoretical guarantees. \citet{Fujita2018} introduce the clipped action policy gradient (CAPG) which is a partially integrated form of policy gradient and provide a variance reduction guarantee, but their result is not tight. By viewing CAPG as a marginal policy gradient we obtain tighter results.

\para{Variance Decomposition} The law of total variance, or variance decomposition, is given by $\Var[Y] = \EE[\Var(Y|X)] + \Var[\EE[Y|X]]$, where $X$ and $Y$ are two random variables on the same probability space. Our main result can be viewed as a special form of law of total variance, but it is highly non-trivial to obtain the result directly from the law of total variance. Also related to our approach is Rao-Blackwellization \citep{Blackwell1947} of a statistic to obtain a lower variance estimator.

\section{Preliminaries}
\label{sec:preliminaries}

\para{Notation and Setup}
For MDP's we use the standard notation. $\cS$ is the state space, $\cA$ is the action space, $p$ denotes the transition probability kernel, $p_0$ the initial state distribution, $r$ the reward function. A policy $\pi(a|s)$ is a distribution over actions given a state $s \in \cS$. A sample trajectory under $\pi$ is denoted $\tau_{\pi} := (s_0,a_0,r_1,s_1,a_1,\dots)$ where $s_0 \sim p_0$ and $a_t \sim \pi(\cdot | s_t)$. The state-value function is defined as $v_{\pi}(s) := \EE_{\pi}[\sum_{t=0}^{\infty}\gamma^t  r_{t+1} | s_0 = s]$ and the action-value function as $q_{\pi}(s,a) := \EE_{\pi}[\sum_{t=0}^{\infty}\gamma^t  r_{t+1} | s_0=s, a_0=a]$.  The objective is to maximize expected cumulative discounted reward, $\eta(\pi) = \EE_{p_0}[v_{\pi}(s_0)]$. $\rho_{\pi}$ denotes the improper discounted state occupancy distribution, defined as $\rho_{\pi} := \sum_t \gamma^t  \EE_{p_0}\left[\PP(s_t=s | s_0,\pi) \right]$. We make the standard assumption of bounded rewards.

We consider the problem of learning a policy $\pi$ parametrized by $\btheta \in \Theta$. All gradients are with respect to $\btheta$ unless otherwise stated. By convention, we define $0 \cdot \infty = 0$ and $\frac{0}{0} = 0$. A measurable space $(\cA,\cE)$ is a set $\cA$ with a sigma-algebra $\cE$ of subsets of $\cA$. When we refer to a probability distribution of a random variable taking values in $(\cA,\cE)$ we will work directly with the probability measure on $(\cA,\cE)$ rather than the underlying sample space. For a measurable mapping $T$ from measure space $(\cA,\cE,\lambda)$ to measurable space $(\cB,\cF)$, we denote by $T_*\lambda$ the push-forward of $\lambda$. $\cS^{d-1}$ denotes the unit sphere in $\RR^d$ and for any space $\cA$, $B(\cA)$ denotes the Borel $\sigma$-algebra on $\cA$. The notation $\mu \ll \nu$ signifies the measure $\mu$ is absolutely continuous with respect to $\nu$. The function $\clip$ is defined as $\clip(a,\alpha,\beta) = \min(\beta,\max(\alpha,a))$ for $a \in \RR$. If $a \in \RR^d$, it is interpreted element-wise.

\para{Variance of Random Vectors}
We define the variance of a random vector $\by$ as $\Var(\by) = \EE[(\by-\EE \by)^\top (\by-\EE \by)]$, i.e. the trace of the covariance of $\yb$; it is easy to verify standard properties of the variance still hold. This definition is often used to analyze the variance of gradient estimates \citep{Greensmith2004}.

\para{Stochastic Policy Gradients}
In Section \ref{sec:marginal_pgrad} we present marginal policy gradient estimators and work in the very general setting described below. Let $(\cA,\cE,\mu)$ be a measure space, where as before $\cA$ is the action space of the MDP. In practice, we often encounter $(\cA,\cE) = (\RR^d,B(\RR^d))$ with $\mu$ as the Lebesgue measure. The types of policies for which there is a meaningful notation of stochastic policy gradients are {\it $\mu$-compatible measures} (see remarks \ref{rem:measure} and \ref{rem:mucompatible}).

\begin{definition}[$\mu$-Compatible Measures]
\label{def:compatible_measure}
Let $(\cA,\cE,\mu)$ be a measure space and consider a parametrized family of measures $\Pi = \{ \pi(\cdot,\theta) : \theta \in \Theta \}$ on the same space. $\Pi$ is a $\mu$-compatible family of measures if for all $\theta$:
\begin{enumerate}[label={(\alph*)},itemsep=-3pt,topsep=-5pt]
\item $\pi(\cdot,\theta) \ll \mu$ with density of the form $f_{\pi}(\cdot,\theta)$,
\item $f_{\pi}$ is differentiable in $\theta$, and
\item $\pi$ satisfies the conditions to apply the Leibniz integral rule for each $\theta$, so that $\nabla \int_\cA f_{\pi}(a)d\mu = \int_\cA \nabla f_{\pi}(a)d\mu$.
\end{enumerate}
\end{definition}
For $\mu$-compatible policies, Theorem \ref{thm:pgrad} gives the stochastic policy gradient, easily estimable from samples. When $\mu$ is the counting measure we recover the discrete policy gradient theorem \citep{Sutton2000}. See Appendix \ref{sec:more_prelim:pgrad} for a more in depth discussion and a proof of Theorem \ref{thm:pgrad}, which we include for completeness.
\begin{theorem}[Stochastic Policy Gradient]
\label{thm:pgrad}
Let $(\cA,\cE,\mu)$ be a measure space and let $\Pi = \{ \pi(\cdot,\theta|s) : \theta \in \Theta \}$ be a family of $\mu$-compatible probability measures. Denoting by $f_{\pi}$ the density with respect to $\mu$, we have that
\begin{equation*}
\nabla \eta = \int_{\cS}d\rho_{\pi}(s)\int_{\cA}q_{\pi}(s,a)\nabla\log f_\pi(a|s) d\pi(\cdot|s).
\end{equation*}
\end{theorem}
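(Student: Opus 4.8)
The plan is to follow the classical unrolling argument for the policy gradient theorem \citep{Sutton2000}, but to track carefully where $\mu$-compatibility (Definition \ref{def:compatible_measure}) is invoked to license the measure-theoretic steps. First I would record the Bellman relations $v_{\pi}(s) = \int_{\cA} q_{\pi}(s,a)\,d\pi(\cdot|s)$ and $q_{\pi}(s,a) = r(s,a) + \gamma\int_{\cS} v_{\pi}(s')\,p(ds'|s,a)$; since $r$ does not depend on $\btheta$, the second relation gives $\nabla q_{\pi}(s,a) = \gamma\int_{\cS}\nabla v_{\pi}(s')\,p(ds'|s,a)$.

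Next I would differentiate $v_{\pi}$. Writing $v_{\pi}(s) = \int_{\cA} f_{\pi}(a|s)q_{\pi}(s,a)\,d\mu(a)$ and applying the Leibniz rule (condition (c), together with the product rule) yields
\[
\nabla v_{\pi}(s) = \int_{\cA} q_{\pi}(s,a)\nabla f_{\pi}(a|s)\,d\mu(a) + \gamma\int_{\cA}\!\int_{\cS}\nabla v_{\pi}(s')\,p(ds'|s,a)\,d\pi(\cdot|s).
\]
Using the log-derivative identity $\nabla f_{\pi} = f_{\pi}\nabla\log f_{\pi}$, valid $\mu$-a.e.\ under the convention $\frac{0}{0}=0$, the first term is $g(s) := \int_{\cA} q_{\pi}(s,a)\nabla\log f_{\pi}(a|s)\,d\pi(\cdot|s)$, while the second is $\gamma$ times the expectation of $\nabla v_{\pi}$ at the successor state under the policy-induced kernel $p_{\pi}(ds'|s) := \int_{\cA} p(ds'|s,a)\,d\pi(\cdot|s)$. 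Thus $\nabla v_{\pi}(s) = g(s) + \gamma\int_{\cS}\nabla v_{\pi}(s')\,p_{\pi}(ds'|s)$.

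Then I would unroll the recursion: iterating $n$ times and using Chapman--Kolmogorov for the $n$-step kernel gives $\nabla v_{\pi}(s) = \sum_{t=0}^{n-1}\gamma^{t}\int_{\cS} g(s')\,\PP(s_t\in ds'\mid s_0=s,\pi) + \gamma^{n}\int_{\cS}\nabla v_{\pi}(s')\,\PP(s_n\in ds'\mid s_0=s,\pi)$. Letting $n\to\infty$, the remainder vanishes because bounded rewards and $\gamma<1$ bound $\nabla v_{\pi}$ uniformly. Integrating against $p_0$ and recognizing $\int_{\cS}\sum_t \gamma^{t}\PP(s_t\in\cdot\mid s_0,\pi)\,p_0(ds_0) = \rho_{\pi}$ gives $\nabla\eta = \int_{\cS}d\rho_{\pi}(s)\,g(s)$, which is the claim.

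The hard part will not be the algebra but the analytic bookkeeping: justifying (i) the interchange of $\nabla$ with the $\cA$-integral in the presence of the $q_{\pi}$ factor --- condition (c) is phrased for $\int f_{\pi}\,d\mu$ alone, so one must add boundedness of $q_{\pi}$ and a dominating function for $q_{\pi}\nabla f_{\pi}$ and for $f_{\pi}\nabla q_{\pi}$ --- and (ii) the interchange of $\nabla$ with the infinite-horizon sum implicit in $v_{\pi}$, which is cleanest to handle by truncating the horizon so that Leibniz applies to a genuinely finite composition and then passing to the limit via the geometric tail. Since the statement is included only for completeness, I would keep these arguments short in the main text and relegate the fully rigorous version to the appendix.
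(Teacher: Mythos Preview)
Your argument is correct and shares its central step with the paper: both differentiate $v_{\pi}(s)=\int_{\cA} q_{\pi}(s,a)f_{\pi}(a|s)\,d\mu$ using $\mu$-compatibility to get $\nabla v_{\pi}(s)=\int_{\cA} q_{\pi}(s,a)\nabla f_{\pi}(a|s)\,d\mu+\int_{\cA}\nabla q_{\pi}(s,a)\,d\pi(\cdot|s)$. The difference is in how this is turned into the final identity. You unroll the recursion by hand---substituting $\nabla q_{\pi}=\gamma\int\nabla v_{\pi}\,dp$, iterating, and identifying $\rho_{\pi}$ in the limit---whereas the paper skips this by invoking Proposition~\ref{prop:generic_pgrad} (from \citet{Ciosek2018}), which already states $\nabla\eta=\int_{\cS}d\rho_{\pi}(s)\bigl[\nabla v_{\pi}(s)-\int_{\cA}\nabla q_{\pi}(s,a)\,d\pi(\cdot|s)\bigr]$; plugging the expression for $\nabla v_{\pi}$ in gives the theorem in one line. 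Your route is more self-contained and makes the role of $\gamma<1$ and bounded rewards explicit, while the paper's is shorter at the cost of outsourcing the unrolling to a cited result. Your caveat about condition~(c) being stated only for $\int f_{\pi}\,d\mu$ rather than $\int q_{\pi}f_{\pi}\,d\mu$ is well taken; the paper glosses over this too.
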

In general we want an estimate $g$ of $\nabla \eta$ such that it is unbiased ($\EE[g] = \nabla \eta$) and that has minimal variance, so that convergence to a (locally) optimal policy is as fast as possible. In the following sections, we explore a general approach to finding a low variance, unbiased estimator.

\begin{remark}
\label{rem:measure}
Under certain choices of $T$ (e.g. clipping) the effective action distribution is a mixture of a continuous distribution and point masses. Thus, although it adds some technical overhead, it is necessary that we take a measure theoretic approach in this work.
\end{remark}

\begin{remark}
\label{rem:mucompatible}
Definition \ref{def:compatible_measure} is required to ensure the policy gradient is well defined, as it stipulates the existence of an appropriate reference measure; it also serves to clarify notation and to draw a distinction between $\pi$ and its density $f_\pi$. Though these details are often minimized they are important in analyzing the interaction between $T$ and $\pi$.
\end{remark}

\section{Angular Policy Gradients}
\label{sec:angular_pgrad}
Consider the task of learning a policy over directions in $\cA = \RR^2$, or equivalently learning a policy over angles $[0,2\pi)$. A naive approach is to fit the mean $m_{\theta}(s)$, model the angle as normally distributed about $m_{\theta}$, and then clip the sampled angle before execution in the environment. However, this approach is asymmetric in that does not place similar probability on $m_{\theta}(s) - \epsilon$ and $m_{\theta}(s) + \epsilon$ for $m_{\theta}(s)$ near to $0$ and $2\pi$.

An alternative is to model $m_{\theta}(s) \in \RR^2$, sample $a \sim \cN(m_{\theta}(s),\Sigma)$, and then execute $T(a) := a/||a||$ in the environment. This method also works for directional control in $\RR^d$. The drawback of this approach is the following: informally speaking, we are sampling from a distribution with $d$ degrees of freedom, but the environment is affected by an action with only $d-1$ degrees of freedom. This suggests, and indeed we later prove, that the variance of the stochastic policy gradient for this distribution is unnecessarily high.
In this section we introduce the angular policy gradient which can be used as a drop-in replacement for the policy update step in existing algorithms.

\begin{figure}[h]
\begin{tabular}{l}
  \begin{tabular}{c c c}
    \hspace{-2.8em} \includegraphics[width=0.37\textwidth]{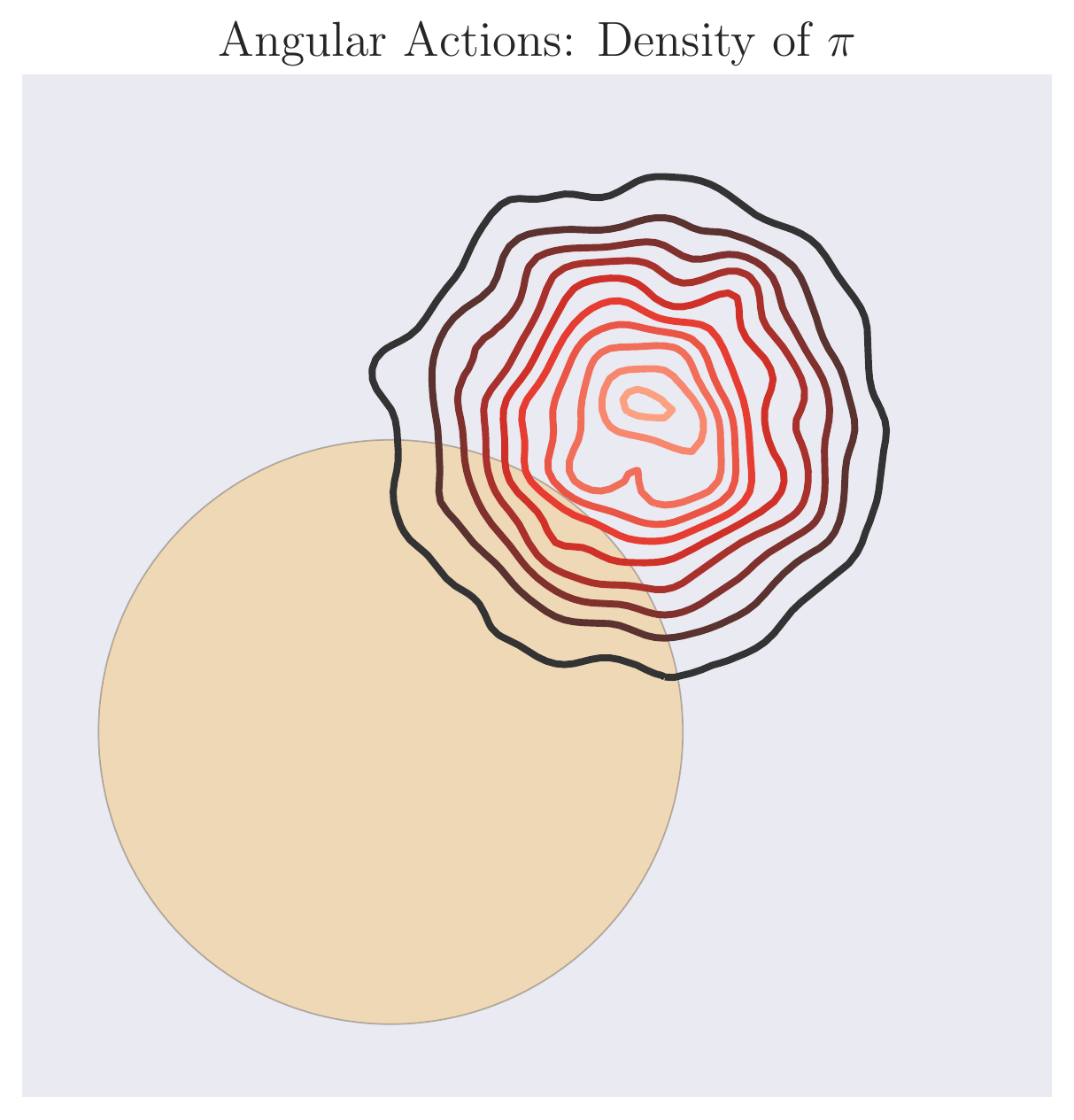} &
    \hspace{-2.4em} \includegraphics[width=0.37\textwidth]{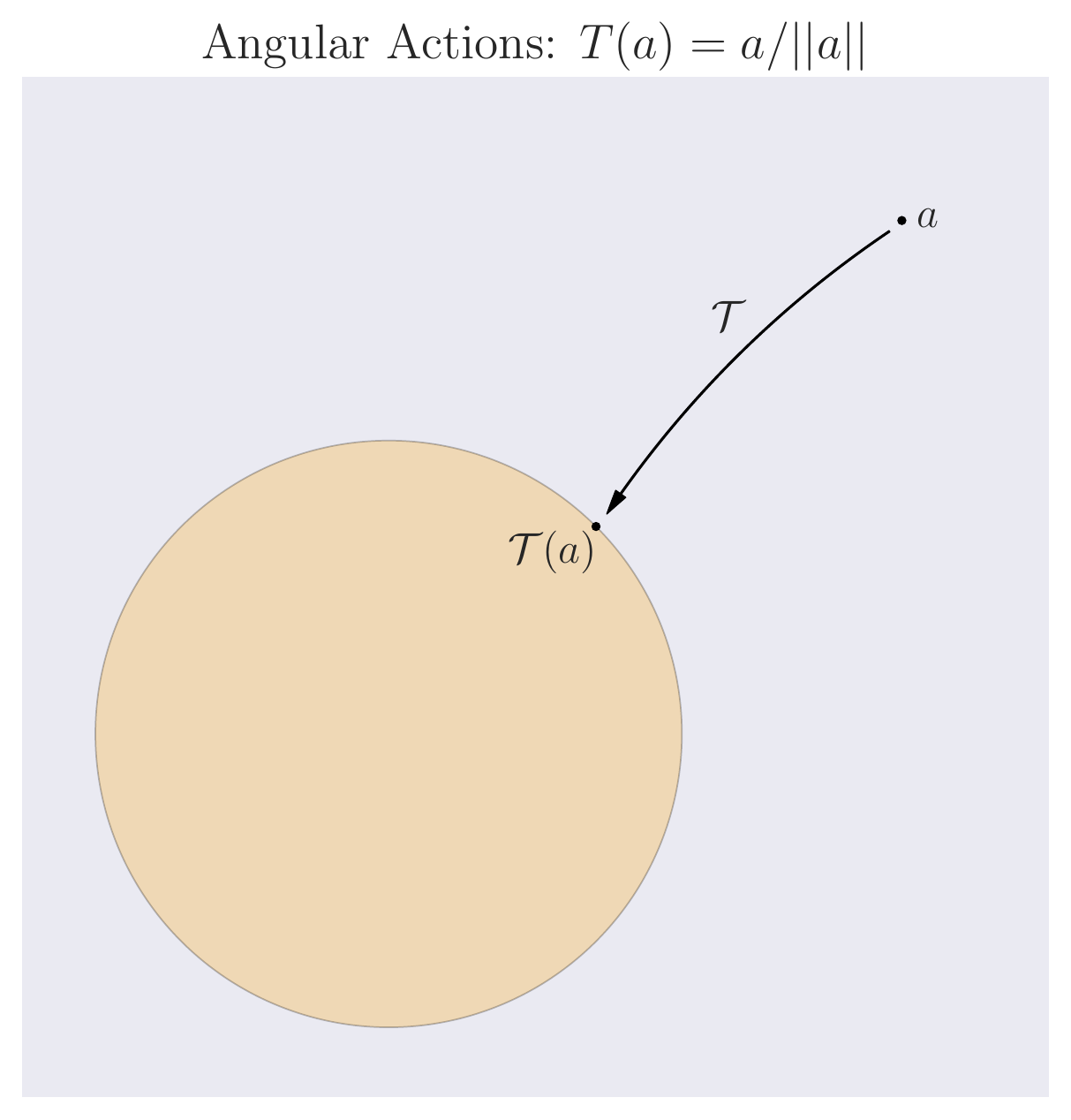} &
    \hspace{-2.4em} \includegraphics[width=0.37\textwidth]{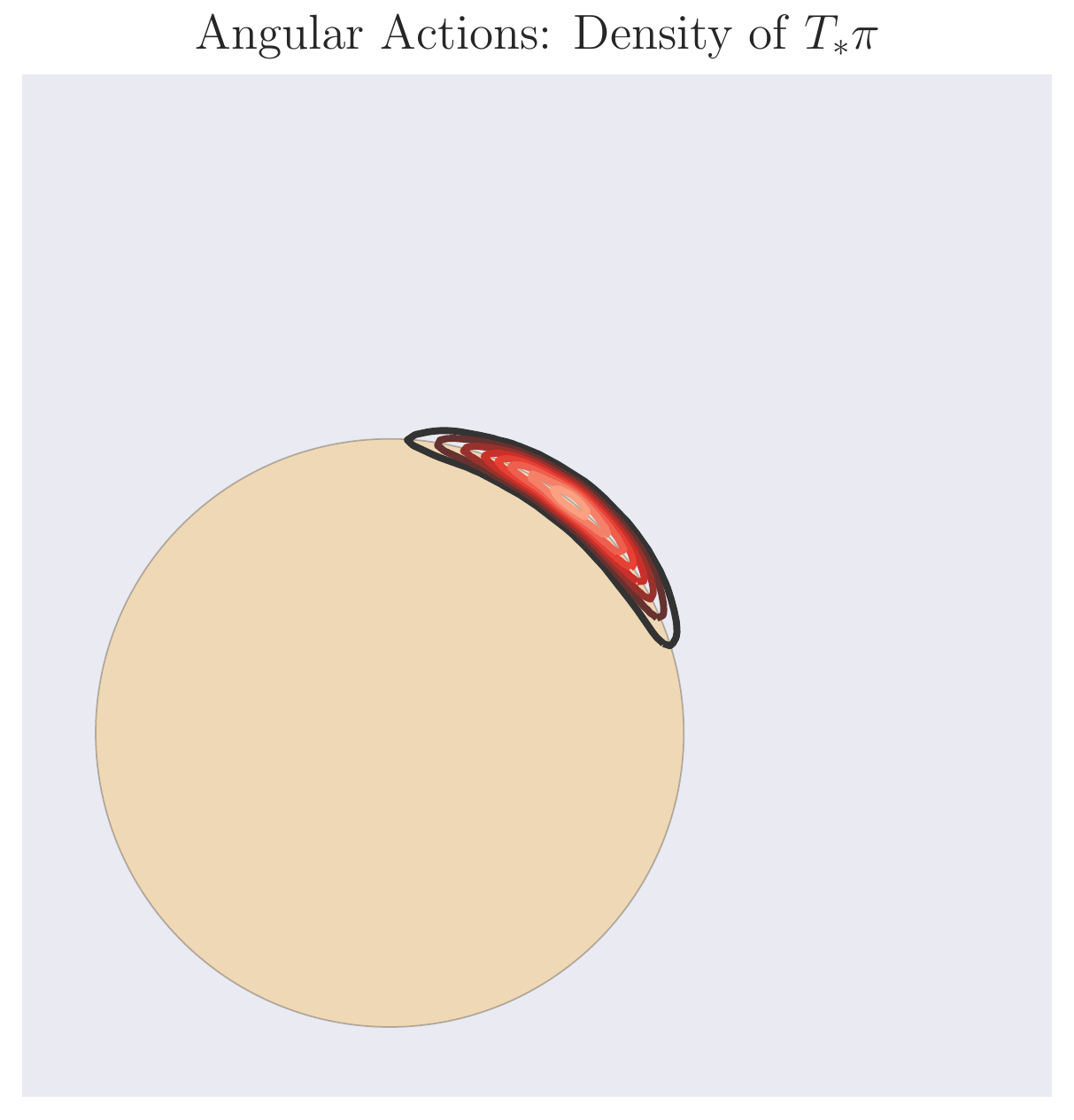}
  \end{tabular}
\end{tabular}
\caption{Transformation of a Gaussian policy -- (left to right) $\pi(\cdot | s)$, $T=a/||a||$, and $T_*\pi(\cdot | s)$.}
\label{fig:transform_viz}
\end{figure}

\subsubsection*{Angular Gaussian Distribution}
Instead, we can directly model $T(a) \in \cS^{d-1}$ instead of $a \in \RR^d$. If $a \sim \cN(m_{\theta}(s),\Sigma_{\theta}(s))$, then $T(a)$ is distributed according to what is known as the angular Gaussian distribution (Definition \ref{def:angular_gauss}). It can be derived by a change of variables to spherical coordinates, followed by integration with respect to the magnitude of the random vector \citep{Paine2018}. Figure \ref{fig:transform_viz} illustrates the transformation of a Gaussian sampling policy $\pi$ under $T$.

\begin{definition}[Angular Gaussian Distribution]
\label{def:angular_gauss}
Let $a \sim \cN(m,\Sigma)$. Then, with respect to the spherical measure $\sigma$ on $(\cS^{d-1},B(\cS^{d-1}))$, $x = a/||a||$ has density
\begin{equation}
\label{eqn:angular_gauss_density}
f(x; m, \Sigma) = \left( (2\pi)^{d-1}|\Sigma| (x^\top \Sigma^{-1}x)^{d}\right)^{-1/2} \exp \left( \frac{1}{2} \left(\alpha^2 - m^\top \Sigma^{-1}m\right) \right) \cM_{d-1}( \alpha),
\end{equation}
where $\alpha = \frac{x^\top \Sigma^{-1}m}{(x^\top \Sigma^{-1}x)^{1/2}}$ and
$\cM_{d-1}(x) = (2\pi)^{-\frac{1}{2}}\int_{0}^\infty u^{d-1} \exp(-(u-x)^2/2) du$.
\end{definition}

\subsubsection*{Policy Gradient Method}
Although the density in Definition \ref{def:angular_gauss} does not have a closed form, we can still obtain a stochastic policy gradient for this type of policy. Define the action space as $\cA := \cS^{d-1}$ and consider angular Gaussian policies parametrized by $\theta := (\theta_{m},\theta_{\Sigma})$, where $\theta_{m}$ parametrizes $m$ and $\theta_{\Sigma}$ parametrizes $\Sigma$. As before, denote the corresponding parametrized family of measures as $\Pi := \{ \pi(\cdot,\theta|s) : \theta \in \Theta \}$. Directly from Definition \ref{def:angular_gauss}, we obtain
\[
\log f_{\pi} = \frac{1}{2} \left(\alpha^2 - m^\top \Sigma^{-1}m\right) + \log \cM_{d-1}( \alpha) - \frac{1}{2}\left[(d-1)\log 2\pi + \log |\Sigma| + d\log\left(x^\top \Sigma x\right)\right].
\]
Though this log-likelihood does not have a closed form, it turns out it is easy to compute the gradient in practice. It is only necessary that we can evaluate $\cM_{d-1}'(\alpha)$ and $\cM_{d}(\alpha)$ easily. Assuming for now that we can do so, denote by $\theta_i$ the parameters after $i$ gradient updates and define
\[
l_i(\theta) := \frac{1}{2}\left(\alpha^2 - m^\top \Sigma^{-1}m\right) + \underlabel{\frac{\cM_{d-1}'(\alpha(\theta_i))}{\cM_{d-1}(\alpha(\theta_i))}}{(i)}\alpha - \frac{1}{2}\left[(d-1)\log 2\pi + \log |\Sigma| + d\log\left(x^\top \Sigma x\right)\right].
\]
By design,
\[
\nabla \log f_{\pi}(\theta) |_{\theta = \theta_i} = \nabla l_i(\theta)|_{\theta = \theta_i},
\]
thus at update $i$ it suffices to compute the gradient of $l_i$, which can be done using standard auto-differentiation software \citep{Paszke2017} since term (i) is a constant. From \citet{Paine2018}, we have that $\cM_{d}'(\alpha) = d\cM_{d-1}(\alpha)$, $\cM_{d+1}(\alpha) = \alpha\cM_d(\alpha) + d\cM_{d-1}(\alpha)$, $\cM_1(\alpha) = \alpha\Phi(\alpha) + \phi(\alpha)$ and $\cM_0(\alpha) = \Phi(\alpha)$, where $\Phi$, $\phi$ denote the PDF and CDF of $\cN(0,1)$, respectively. Leveraging these properties, the integral $\cM_d(\alpha)$ can be computed recursively; Algorithm \ref{alg:m_function} in Appendix \ref{sec:more_theory:angular} gives psuedo-code for the computation. Importantly it runs in $\cO(d)$ time and therefore does not effect the computational cost of the policy update since it is dominated by the cost of computing $\nabla l_i$.  In addition, stochastic gradients of policy loss functions for TRPO or PPO~\cite{Schulman2015, Schulman2017} can be computed in a similar way since we can easily get the derivative of $f_{\pi}(\theta)$ when $\cM_{d-1}(\alpha)$ and $\cM_{d-1}'(\alpha)$ are known.

\section{Marginal Policy Gradient Estimators}
\label{sec:marginal_pgrad}

In Section \ref{sec:preliminaries}, we described a general setting in which a stochastic policy gradient theorem holds on a measure space $(\cA,\cE,\lambda)$ for a family of $\lambda$-compatible probability measures, $\Pi = \{ \pi(\cdot,\theta | s) : \theta \in \Theta \}$. As before, we are interested in the case when the dynamics of the environment only depend on $a \in \cA$ via a function $T$. That is to say $r(s,a) := r(s,T(a))$ and $p(s,a,s') := p(s,T(a),s')$.

The key idea in Marginal Policy Gradient is to replace the policy gradient estimate based on the log-likelihood of $\pi$ with a lower variance estimate, which is based on the log-likelihood of $T_*\pi$. $T_*\pi$ can be thought of as (and in some cases is)  a marginal distribution, hence the name {\it Marginal Policy Gradient}.  For this reason it can easily be used with value function approximation and GAE, as well as incorporated into algorithms like TRPO, A3C and PPO.

\subsection{Setup and Regularity Conditions}
For our main results we need regularity Condition \ref{asmp:action_nice} on the measure space $(\cA,\cE,\lambda)$. Next, let $(\cB,\cF)$ be another measurable space and $T:\cA \rightarrow \cB$ be a measurable mapping. $T$ induces a family of probability measures on $(\cB,\cF)$, denoted $T_*\Pi := \{ T_*\pi(\cdot,\theta | s) : \theta \in \Theta \}$. We also require regularity Conditions \ref{asmp:image_nice} and \ref{asmp:reference_exists} regarding the structure of $\cF$ and the existence of a suitable reference measure $\mu$ on $(\cB,\cF)$. These conditions are all quite mild and are satisfied in all practical settings, to the best of our knowledge.

\begin{condition}
\label{asmp:action_nice}
$\cA$ is a metric space and $\lambda$ is a Radon measure.\footnote{On a metric space $\cA$, a Radon measure is a measure defined on the Borel $\sigma$-algebra for which each compact $K \subset \cA$, $\lambda(K)<\infty$ and for all $B \in B(\cA)$, $\lambda(B) = \sup_{K \subseteq B} \lambda(K)$ where $K$ is compact. }
\end{condition}
\begin{condition}
\label{asmp:image_nice}
$\cF$ is countably generated and contains the singleton sets $\{b\}$, for all $b \in \cB$.
\end{condition}
\begin{condition}
\label{asmp:reference_exists}
There exists a $\sigma$-finite measure $\mu$ on $(\cB,\cF)$ such that $T_*\lambda \ll \mu$ and  $T_*\Pi$ is $\mu$-compatible.
\end{condition}

In statistics, Fisher information is used to capture the variance of a score function. In reinforcement learning, typically one encounters a score function that has been rescaled by a measurable function $q(a)$. Definition \ref{def:scaled_fisher_inf} provides a variant of Fisher information for $\lambda$-compatible distributions and rescaled score functions; we defer a discussion of the definition until Section \ref{sec:mpg:disc} after we present our results in their entirety. If $q(a) = 1$, Definition \ref{def:scaled_fisher_inf} is the trace of the classical Fisher Information.

\begin{definition}[Total Scaled Fisher Information]
\label{def:scaled_fisher_inf}
Let $(\cA,\cE,\lambda)$ be a measure space, $\Pi = \{ \pi(\cdot,\theta) : \theta \in \Theta \}$ be a family of $\lambda$-compatible probability measures, and $q$ a measurable function on $\cE$. The total scaled fisher information is defined as
$\cI_{\pi,\lambda}(q,\theta) := \EE[q(a)^2\nabla\log f_\pi(a)^\top \nabla\log f_\pi(a)]$.
\end{definition}

\subsection{Variance Reduction Guarantee}
From Theorem \ref{thm:pgrad} it is immediate that
\begin{align*}
\nabla\eta(\btheta) &= \int_{\cS}d\rho(s)\int_{\cA}q(T(a),s)\nabla\log f_\pi(a|s) d\pi(a|s) \\
&= \int_{\cS} d\rho(s) \int_{\cB} q(b,s)\nabla\log f_{T_*\pi}(b|s)  d(T_*\pi)(b|s),
\end{align*}
where we dropped the subscripts on $\rho$ and $q$ because the two polices affect the environment in the same way, and thus have the same value function and discounted state occupancy measure. Denote the two alternative gradient estimators as
$g_1 = q(T(a),s)\nabla\log f_\pi(a|s)$ and $g_2 = q(b,s)\nabla \log f_{T_*\pi}(b|s)$.
Just by definition, we have that $\EE_{\rho,\pi}[g_1] = \EE_{\rho,\pi}[g_2]$. Lemma \ref{lem:expected_equivalence} says something slightly different -- it says that they are also equivalent in expectation conditional on the state $s$, a fact we use later.
\begin{lemma}
\label{lem:expected_equivalence}
Let $(\cA,\cE,\lambda)$ and $(\cB,\cF,\mu)$ be measure spaces, and $T:\cA \rightarrow \cB$ be measurable mapping. If $\Pi$, parametrized by $\theta$, is $\lambda$-compatible and $T_*\Pi$ is $\mu$-compatible, then
\begin{equation}
\label{eqn:expected_equivalence}
\EE_{\pi|s}\left[g_1\right] = \EE_{\pi|s}\left[g_2\right] = \EE_{T_*\pi|s}\left[g_2\right].
\end{equation}
\end{lemma}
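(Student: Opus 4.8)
The plan is to prove the two equalities in \eqref{eqn:expected_equivalence} separately. The second equality, $\EE_{\pi|s}[g_2] = \EE_{T_*\pi|s}[g_2]$, is the change-of-variables / push-forward identity: since $g_2 = q(b,s)\nabla\log f_{T_*\pi}(b|s)$ depends on $a$ only through $b = T(a)$, it is the composition of a measurable function $h(b) := q(b,s)\nabla\log f_{T_*\pi}(b|s)$ on $(\cB,\cF)$ with $T$. The standard push-forward integration theorem then gives $\int_\cA h(T(a))\,d\pi(a|s) = \int_\cB h(b)\,d(T_*\pi)(b|s)$, which is exactly what is claimed. The only subtlety is to note that $h$ is well-defined $T_*\pi$-a.e.: since $T_*\Pi$ is $\mu$-compatible, $f_{T_*\pi}(\cdot|s)$ is a genuine density with respect to $\mu$ and $\nabla\log f_{T_*\pi}$ is defined wherever $f_{T_*\pi} > 0$, which is $T_*\pi$-a.e.; the convention $0/0 = 0$ handles the rest.

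The first equality, $\EE_{\pi|s}[g_1] = \EE_{\pi|s}[g_2]$, is the substantive one. Here I would unfold both sides using Theorem \ref{thm:pgrad}'s building block, namely the identity $\int_\cA q_\pi \nabla\log f_\pi \, d\pi = \int_\cA q_\pi \nabla f_\pi \, d\lambda$ (and likewise on $\cB$), which follows from $\nabla\log f = \nabla f / f$ together with the convention $0\cdot\infty = 0$. Using $r$ and $p$ — hence $q$ — depend on $a$ only through $T(a)$, write $\EE_{\pi|s}[g_1] = \int_\cA q(T(a),s)\,\nabla f_\pi(a|s)\,d\lambda(a)$. Now push the gradient outside via the Leibniz-rule clause (c) of $\lambda$-compatibility, and do the same on the $\cB$ side: both reduce to $\nabla$ applied to $\int q(T(a),s)\,d\pi(a|s) = \nabla \int q(b,s)\,d(T_*\pi)(b|s)$, and these two integrals are equal by the push-forward theorem applied to the $\theta$-independent integrand $q(\cdot,s)$. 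So the two gradient estimators integrate, against their respective densities, to the same $\btheta$-gradient of the same scalar; this is really the statement that $\EE_{\rho,\pi}[g_1] = \EE_{\rho,\pi}[g_2]$ refined to hold fiber-by-fiber over $s$.

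The main obstacle is the careful bookkeeping around differentiation under the integral sign and the measure-theoretic regularity needed to invoke the push-forward identity: one must check that Condition \ref{asmp:reference_exists} ($T_*\Pi$ is $\mu$-compatible, so $\nabla\int f_{T_*\pi}\,d\mu = \int \nabla f_{T_*\pi}\,d\mu$) is what legitimizes moving $\nabla$ in and out on the $\cB$-side, and that $q$ is bounded (which follows from the bounded-rewards assumption and $\gamma < 1$, so all integrals are finite and interchange is valid). Once the push-forward theorem and the two Leibniz-rule conditions are in place the argument is short; the care is in not conflating $\pi$ with its density $f_\pi$ and in tracking which reference measure ($\lambda$ on $\cA$, $\mu$ on $\cB$) each density refers to, exactly the distinction Remark \ref{rem:mucompatible} flags.
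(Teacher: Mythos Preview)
Your proposal is correct and matches the paper's approach: the paper's proof is the single line ``follows immediately from the proof of Theorem~\ref{thm:pgrad},'' and what you have written is precisely the unpacking of that remark---use $\lambda$- and $\mu$-compatibility to move $\nabla$ in and out of the integrals on each side, then identify both with the gradient of the same push-forward integral $\int q(T(a),s)\,d\pi(a|s)=\int q(b,s)\,d(T_*\pi)(b|s)$. The one place to be a bit more careful in wording is your phrase ``the $\theta$-independent integrand $q(\cdot,s)$'': $q=q_\pi$ does depend on $\theta$ through $\pi$, but in the expressions for $g_1,g_2$ it enters as a frozen coefficient, so your auxiliary function $\theta'\mapsto\int q_{\pi_\theta}(T(a),s)\,f_\pi(a|s,\theta')\,d\lambda$ (with $q$ held at the current $\theta$) is what you are differentiating---that is the correct reading, just make it explicit.
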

\begin{proof}
The result follows immediately from the proof of Theorem \ref{thm:pgrad} in Appendix \ref{sec:more_prelim:pgrad}.
\end{proof}
Because the two estimates $g_1$ and $g_2$ are both unbiased, it is always preferable to use whichever has lower variance. Theorem \ref{thm:variance_reduction} shows that $g_2$ is the lower variance policy gradient estimate. See Appendix \ref{sec:more_theory:proof_var} for the proof. The implication of Theorem \ref{thm:variance_reduction} is that if there is some information loss via a function $T$ before the action interacts with the dynamics of the environment, then one obtains a lower variance estimator of the gradient by replacing the density of $\pi$ with the density of $T_*\pi$ in the expression for the policy gradient.

\begin{theorem}
\label{thm:variance_reduction}
Let $g_1$ and $g_2$ be as defined above. Then if Conditions \ref{asmp:action_nice}-\ref{asmp:reference_exists} are satisfied,
\[
\Var_{\rho,\pi}(g_1) - \Var_{\rho,T_*\pi}(g_2) = \EE_{\rho,T_*\pi}\left[\cI_{\pi|s|b,\lambda_b}(q\circ T,\theta)\right] \geq 0,
\]
for some family of measures $\{\lambda_b\}$ on $\cA$.
\end{theorem}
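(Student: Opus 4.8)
The plan is to prove the identity by a careful application of the law of total variance (in the trace-variance sense of Section~\ref{sec:preliminaries}), conditioning on the transformed action $b = T(a)$, after first disintegrating the sampling measure $\pi(\cdot|s)$ along $T$. Fix a state $s$; it suffices to establish the conditional identity
\[
\Var_{\pi|s}(g_1) - \Var_{T_*\pi|s}(g_2) = \EE_{T_*\pi|s}\bigl[\cI_{\pi|s|b,\lambda_b}(q\circ T,\theta)\bigr],
\]
and then integrate against $\rho$; nonnegativity is then automatic because $\cI$ is an expectation of a squared Euclidean norm. To make sense of the right-hand side, I would invoke a disintegration/regular-conditional-probability theorem, which is exactly what Conditions~\ref{asmp:action_nice}--\ref{asmp:reference_exists} are tailored for ($\cA$ a metric space with $\lambda$ Radon, $\cF$ countably generated and containing singletons, and a $\sigma$-finite reference $\mu$ on $\cB$ with $T_*\lambda\ll\mu$): this yields a family of measures $\{\lambda_b\}$ with $\lambda_b$ supported on the fiber $T^{-1}(b)$, and correspondingly a factorization of densities $f_\pi(a|s) = h_b(a|s)\, f_{T_*\pi}(b|s)$ with $b = T(a)$, where $h_b(\cdot|s)$ is the density of the regular conditional distribution $\pi|s|b$ with respect to $\lambda_b$.

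With the factorization in hand, taking gradients gives the score decomposition $\nabla\log f_\pi(a|s) = \nabla\log h_b(a|s) + \nabla\log f_{T_*\pi}(b|s)$. Since $\int h_b(a|s)\, d\lambda_b(a) = 1$ for every $\theta$, differentiating under the integral sign (justified by the Leibniz-type hypotheses built into $\lambda$-compatibility, which I would check descend to the $\{\lambda_b\}$) gives $\EE_{\pi|s|b}[\nabla\log h_b(a|s)] = 0$. Because $q\circ T$ is constant on each fiber, equal to $q(b,s)$, this yields $\EE_{\pi|s}[g_1\mid b] = q(b,s)\,\EE_{\pi|s|b}[\nabla\log f_\pi(a|s)] = q(b,s)\nabla\log f_{T_*\pi}(b|s) = g_2$, a conditional refinement of Lemma~\ref{lem:expected_equivalence}.

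Now the law of total variance gives $\Var_{\pi|s}(g_1) = \EE_{T_*\pi|s}[\Var_{\pi|s}(g_1\mid b)] + \Var_{T_*\pi|s}(\EE_{\pi|s}[g_1\mid b])$, where the cross term vanishes by the same conditioning argument (valid for the trace-variance); the second term is $\Var_{T_*\pi|s}(g_2)$ by the previous step. For the first term, on the fiber over $b$ we have $g_1 = q(b,s)\bigl(\nabla\log h_b(a|s) + \nabla\log f_{T_*\pi}(b|s)\bigr)$, and given $b$ only $\nabla\log h_b$ is random, so since $\nabla\log h_b$ is mean-zero under $\pi|s|b$ we get $\Var_{\pi|s}(g_1\mid b) = q(b,s)^2\,\EE_{\pi|s|b}[\nabla\log h_b(a|s)^\top\nabla\log h_b(a|s)] = \cI_{\pi|s|b,\lambda_b}(q\circ T,\theta)$. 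Taking $\EE_{T_*\pi|s}$ and then $\EE_\rho$ completes the proof.

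The main obstacle is the measure-theoretic step: establishing the disintegration with enough regularity that $h_b$ and $f_{T_*\pi}$ are jointly well-behaved (measurable in $b$, differentiable in $\theta$, with the product factorization of densities holding $\lambda$-a.e.) and that differentiation under the fiber integral is legitimate — in other words, verifying that $\lambda$-compatibility and the Leibniz conditions genuinely descend to the conditional reference measures $\{\lambda_b\}$ in a way compatible with the $\mu$-compatibility of $T_*\Pi$. Once that bookkeeping of reference measures is set up correctly, the remainder is the clean total-variance computation above.
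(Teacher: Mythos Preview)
Your proposal is correct and follows essentially the same route as the paper. The paper packages the key step as a separate ``Fisher Information Decomposition'' lemma that decomposes the second moment $\EE_{\pi|s}[g_1^\top g_1]$ via disintegration and the score factorization $\nabla\log f_\pi = \nabla\log f_{a|b} + \nabla\log f_{T_*\pi}$ (the cross term vanishing because $\EE_{\pi|b}[\nabla\log f_{a|b}]=0$), whereas you carry out the equivalent computation inline via the law of total variance in $b$ together with the identification $\EE[g_1\mid b]=g_2$; the disintegration machinery, the score factorization, and the regularity bookkeeping you flag as the main obstacle are identical in both arguments.
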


\subsection{Examples of Marginal Policy Gradient Estimators}
\subsubsection*{Clipped Action Policy Gradient}
Consider a control problem where actions in $\RR$ are clipped to an interval $[\alpha,\beta]$. Let $\lambda$ be an arbitrary measure on $(\cA,\cE) := (\RR,B(\RR))$, and consider any $\lambda$-compatible family $\Pi$. Following \citet{Fujita2018}, define the clipped score function
\[
\tilde \psi (s,b,\theta) = \begin{cases}
\nabla \log \int_{(-\infty,\alpha]} f_{\pi}(a,\theta | s)d\lambda & b = \alpha \\
\nabla \log f_{\pi}(b,\theta | s) & b \in (\alpha,\beta) \\
\nabla \log \int_{[\beta,\infty)} f_{\pi}(a,\theta | s)d\lambda & b = \beta.
\end{cases}
\]
We can apply Theorem \ref{thm:variance_reduction} in this setting to obtain Corollary \ref{cor:capg_1D}. It is a strict generalization of the results in \citet{Fujita2018} in that it applies to a larger class of measures and provides a much stronger variance reduction guarantee. It is possible to obtain this more powerful result precisely because we require minimal assumptions for Theorem \ref{thm:variance_reduction}. Note that the result can be extended to $\RR^d$, but we stick to $\RR$ for clarity of presentation. See Appendix \ref{sec:more_theory:mgrad} for a discussion of which distributions are $\lambda$-compatible and a proof of Corollary \ref{cor:capg_1D}.
\begin{corollary}
\label{cor:capg_1D}
Let $\lambda$ be an arbitrary measure on $(\cA,\cE) := (\RR,B(\RR))$, $T(a) := \clip(a,\alpha,\beta)$, and $\psi(s,a,\theta) := \nabla \log f_{\pi}(a,\theta|s)$.  If $\Pi$ is a $\lambda$-compatible family parametrized by $\theta$ and the dynamics of the environment depend only on $T(a)$, then
\begin{enumerate}[leftmargin=*,noitemsep,topsep=-5pt]
\item $\EE_{\pi|s}\left[q_{\pi}(s,a)\psi(s,a,\theta)\right] = \EE_{\pi|s}\left[q_{\pi}(s,a)\tilde\psi(s,T(a),\theta)\right]$, and
\item $\Var_{\rho,\pi}(q_{\pi}(s,a)\psi(s,a,\theta)) - \Var_{\rho,\pi}(q_{\pi}(s,a)\tilde\psi(s,T(a),\theta)) = \EE_{\rho}\left[\EE_{T_*\pi|s}\left[\cI_{\pi|s|b,\lambda_b}(q\circ T,\theta)\right]\right]$, for some family of measures $\{\lambda_b\}$ on $\cA$.
\end{enumerate}
\end{corollary}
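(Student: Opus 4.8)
The plan is to exhibit $\clip(\cdot,\alpha,\beta)$ as an instance of the transformation $T$ in Theorem~\ref{thm:variance_reduction}, to identify the pushforward family $T_*\Pi$ explicitly, and then to quote Lemma~\ref{lem:expected_equivalence} and Theorem~\ref{thm:variance_reduction}. Take $(\cB,\cF)=(\RR,B(\RR))$, so that the continuous map $T$ is measurable, and let the reference measure on $(\cB,\cF)$ be $\mu := \lambda|_{(\alpha,\beta)} + \delta_\alpha + \delta_\beta$. The preimage structure of the clip is trivial to describe: $T^{-1}(\{\alpha\}) = (-\infty,\alpha]$, $T^{-1}(\{\beta\}) = [\beta,\infty)$, and $T^{-1}(B) = B$ for every Borel $B \subseteq (\alpha,\beta)$. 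From this one reads off that $T_*\lambda \ll \mu$ (and likewise $T_*\pi \ll \mu$, since $\pi \ll \lambda$), and that $T_*\pi(\cdot,\theta|s)$ has $\mu$-density
\begin{equation*}
f_{T_*\pi}(b,\theta|s) =
\begin{cases}
\int_{(-\infty,\alpha]} f_\pi(a,\theta|s)\,d\lambda & b = \alpha,\\
f_\pi(b,\theta|s) & b \in (\alpha,\beta),\\
\int_{[\beta,\infty)} f_\pi(a,\theta|s)\,d\lambda & b = \beta,\\
0 & \text{otherwise.}
\end{cases}
\end{equation*}
In particular $\nabla\log f_{T_*\pi}(b,\theta|s)$ is exactly the clipped score $\tilde\psi(s,b,\theta)$ of \citet{Fujita2018}. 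Since the dynamics, hence $q_\pi$, depend on $a$ only through $T(a)$, we have $q_\pi(s,a)=q_\pi(s,T(a))$, so the estimators in the statement are precisely the $g_1 = q(T(a),s)\nabla\log f_\pi(a|s)$ and $g_2 = q(b,s)\nabla\log f_{T_*\pi}(b|s)$ of Theorem~\ref{thm:variance_reduction}.

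With this identification in hand, part (1) of the corollary is just Lemma~\ref{lem:expected_equivalence} and part (2) is just Theorem~\ref{thm:variance_reduction} (writing the expectation over $\rho$ and $T_*\pi$ as the iterated expectation $\EE_\rho[\EE_{T_*\pi|s}[\cdot]]$), provided Conditions~\ref{asmp:action_nice}--\ref{asmp:reference_exists} are met. Condition~\ref{asmp:action_nice} holds because $\cA=\RR$ is a metric space and the measures $\lambda$ of interest (Lebesgue, counting) are Radon; Condition~\ref{asmp:image_nice} holds since $B(\RR)$ is countably generated and contains the singletons. The only substantive verification is Condition~\ref{asmp:reference_exists}: $\mu$ is $\sigma$-finite whenever $\lambda|_{(\alpha,\beta)}$ is, $T_*\lambda \ll \mu$ by the preimage computation above, and $T_*\Pi$ must be shown to be $\mu$-compatible in the sense of Definition~\ref{def:compatible_measure} --- which, via the density formula above, amounts to checking that the truncated integrals $\theta \mapsto \int_{(-\infty,\alpha]} f_\pi(a,\theta|s)\,d\lambda$ and $\theta \mapsto \int_{[\beta,\infty)} f_\pi(a,\theta|s)\,d\lambda$ are differentiable in $\theta$ with the derivative passing under the integral sign.

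The main obstacle is exactly this last point: transferring the Leibniz hypothesis from $\Pi$ (Definition~\ref{def:compatible_measure}(c), stated for integration over all of $\cA$) to integration over the tails $(-\infty,\alpha]$ and $[\beta,\infty)$. Restricting the domain of integration only shrinks the quantities involved, so a domination argument --- bounding $|\nabla f_\pi(\cdot,\theta|s)|$ on the tail regions by the same locally integrable envelope that justifies Leibniz for $\Pi$, and invoking dominated convergence --- closes the gap; this is the one spot I would treat with a careful $\varepsilon$-style argument rather than by appeal to earlier results. After that, the corollary follows by direct substitution, reading $\cI_{\pi|s|b,\lambda_b}(q\circ T,\theta)$ off the right-hand side of Theorem~\ref{thm:variance_reduction}. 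A minor caveat worth stating explicitly in the proof: although the corollary says ``an arbitrary measure $\lambda$'', one tacitly needs $\lambda$ to be $\sigma$-finite and Radon (automatic in every setting arising in practice) in order to invoke Theorem~\ref{thm:variance_reduction}.
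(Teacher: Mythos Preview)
Your proposal is correct and follows essentially the same route as the paper: define the reference measure $\mu=\lambda|_{(\alpha,\beta)}+\delta_\alpha+\delta_\beta$, write down the pushforward density to identify $\tilde\psi$ as $\nabla\log f_{T_*\pi}$, verify Conditions~\ref{asmp:action_nice}--\ref{asmp:reference_exists}, and invoke Lemma~\ref{lem:expected_equivalence} and Theorem~\ref{thm:variance_reduction}. The paper's proof is in fact terser than yours---it simply asserts the conditions hold and does not discuss the Leibniz-on-tails issue or the Radon caveat---but it does make one small step explicit that you leave implicit, namely the change-of-variables identity $\Var_{\rho,\pi}(q_\pi(s,a)\tilde\psi(s,T(a),\theta))=\Var_{\rho,T_*\pi}(q_\pi(s,b)\tilde\psi(s,b,\theta))$ needed to match the corollary's $\Var_{\rho,\pi}$ formulation to Theorem~\ref{thm:variance_reduction}'s $\Var_{\rho,T_*\pi}$ conclusion.
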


\subsubsection*{Angular Policy Gradient}
Now consider the case where we sample an action $a \in \RR^d$ and apply $T(a) = a/||a||$ to map into $\cS^{d-1}$. Let $(\cA,\cE) = (\RR^d,B(\RR^d))$ and let $\lambda$ be the Lebesgue measure. When $\Pi$ is a multivariate Gaussian family parametrized by $\theta$, $T_*\Pi$ is an angular Gaussian family also parametrized by $\theta$ (Section \ref{sec:angular_pgrad}). If $\Pi$ is $\lambda$-compatible -- here it reduces to ensuring the parametrization is such that $f_{\pi}$ is differentiable in $\theta$ -- then $T_*\Pi$ is $\sigma$-compatible, where $\sigma$ denotes the spherical measure. Denoting by $f_{MV}(a,\theta|s)$ and $f_{AG}(b,\theta|s)$ the corresponding multivariate and angular Gaussian densities, respectively, we state the results for this setting as Corollary \ref{cor:angular_gauss}. See Appendix \ref{sec:more_theory:mgrad} for a proof.

\begin{corollary}
\label{cor:angular_gauss}
Let $\lambda$ be the Lebesgue measure on $(\cA,\cE) = (\RR^d,B(\RR^d))$, $T(a) := a/||a||$ and $\Pi$ be a multivariate Gaussian family on $\cA$ parametrized by $\theta$. If the dynamics of the environment only depend on $T(a)$ and $f_{MV}(\cdot,\theta|s)$, the density corresponding to $\Pi$, is differentiable in $\theta$, then
\begin{enumerate}[leftmargin=*,noitemsep,topsep=-5pt]
\item $\EE_{\pi|s}\left[q_{\pi}(s,a)\psi(s,a,\theta)\right] = \EE_{\pi|s}\left[q_{\pi}(s,a)\tilde\psi(s,T(a),\theta)\right]$, and
\item \scalebox{0.97}{$\Var_{\rho,\pi}(q_{\pi}(s,a)\psi(s,a,\theta)) - \Var_{\rho,\pi}(q_{\pi}(s,a)\tilde\psi(s,T(a),\theta)) = \EE_{\rho,T_*\pi}\left[\Var_{\pi|b}(q_{\pi}(s,a)\psi_{r}(s,r,\theta))\right]$},
\end{enumerate}
where $r=||a||$, $f_r$ is the conditional density of $r$, $\psi(s,a,\theta) := \nabla\log f_{MV}(a,\theta|s)$, $\tilde \psi(s,b,\theta) = \nabla \log f_{AG}(b,\theta|s)$, and $\psi_{r}(s,r,\theta) = \nabla \log f_{r}(r,\theta|s)$.
\end{corollary}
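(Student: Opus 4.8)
The plan is to reduce everything to the polar decomposition of the Gaussian density and the resulting additive splitting of the score function. Claim (1) is nothing but Lemma~\ref{lem:expected_equivalence} specialized to $\lambda = $ Lebesgue measure, $\Pi$ the multivariate Gaussian family, and $\mu = \sigma$ the spherical measure, so that $T_*\Pi$ is the angular Gaussian family and is $\sigma$-compatible (checked in Appendix~\ref{sec:more_theory:mgrad}); hence all the work is in claim (2).

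First I would introduce polar coordinates $a \mapsto (r,b)$, with $r = \|a\|$ and $b = a/\|a\|$, a diffeomorphism of $\RR^d \setminus \{0\}$ (a $\lambda$-null set, discarded) onto $(0,\infty)\times\cS^{d-1}$ under which Lebesgue measure corresponds to $r^{d-1}\,dr\otimes\sigma(db)$. Then the joint density of $(r,b)$ is $r^{d-1}f_{MV}(rb,\theta|s)$, its $b$-marginal is $f_{AG}(b,\theta|s) = \int_0^{\infty} r^{d-1}f_{MV}(rb,\theta|s)\,dr$, and the conditional density of $r$ given $b$ is $f_r(r,\theta|s) = r^{d-1}f_{MV}(rb,\theta|s)/f_{AG}(b,\theta|s)$. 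Taking logarithms and differentiating in $\theta$ --- the factor $r^{d-1}$ being $\theta$-free --- yields the key identity
\[
\psi(s,a,\theta) \;=\; \tilde\psi(s,b,\theta) + \psi_r(s,r,\theta), \qquad a = rb .
\]
I would also record that $\psi_r$ is an honest score for the conditional family $\{f_r(\cdot,\theta|s)\}$, so $\EE_{\pi|b}[\psi_r(s,r,\theta)] = \int \nabla_\theta f_r\,dr = \nabla_\theta 1 = 0$, which requires only that the Leibniz interchange valid for $\Pi$ descends to this conditional family.

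With these two facts, claim (2) follows from the law of total variance (applied to each coordinate and summed, consistent with the paper's trace-based variance of a random vector) conditioning on $(s,b)$, where $(s,b) \sim \rho\otimes T_*\pi$ and $r\mid(s,b)\sim f_r$; note that the marginal of $(s,b)$ is the same under $\rho\otimes\pi$ and $\rho\otimes T_*\pi$. Set $Y := q_\pi(s,a)\psi(s,a,\theta)$ and $Z := q_\pi(s,a)\tilde\psi(s,b,\theta)$, and recall $q_\pi(s,a) = q(s,T(a))$ depends on $a$ only through $b$, so $Z$ is $(s,b)$-measurable and $Y - Z = q(s,b)\psi_r(s,r,\theta)$. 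Then $\EE[Y\mid s,b] = Z + q(s,b)\EE_{\pi|b}[\psi_r] = Z$, and the decomposition $\Var_{\rho,\pi}(Y) = \EE_{\rho,T_*\pi}[\Var_{\pi|b}(Y)] + \Var_{\rho,T_*\pi}(\EE[Y\mid s,b])$ becomes, after using that $Z$ is constant given $(s,b)$,
\[
\Var_{\rho,\pi}(Y) - \Var_{\rho,\pi}(Z) \;=\; \EE_{\rho,T_*\pi}\big[\Var_{\pi|b}(q_\pi(s,a)\psi_r(s,r,\theta))\big],
\]
which is the assertion. Alternatively one reads this off Theorem~\ref{thm:variance_reduction} with $g_1 = Y$, $g_2 = Z$: the family $\{\lambda_b\}$ constructed there is the conditional law of $a$ given $T(a)=b$, carried by the ray through $b$, on which $q\circ T$ is the constant $q(s,b)$, so $\cI_{\pi|s|b,\lambda_b}(q\circ T,\theta) = q(s,b)^2\,\EE_{\pi|b}[\psi_r^\top\psi_r] = \Var_{\pi|b}(q_\pi(s,a)\psi_r)$, the last equality by $\EE_{\pi|b}[\psi_r] = 0$.

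I expect the only genuinely delicate step to be the measure-theoretic justification of $f_r$ as a regular conditional density: that the disintegration of $\pi(\cdot|s)$ over $b = T(a)$ exists and is supported on the rays (guaranteed by Condition~\ref{asmp:image_nice}), that it has density $r^{d-1}f_{MV}(rb,\theta|s)$ in the $(r,b)$ chart, and that $\{f_r(\cdot,\theta|s)\}$ is differentiable in $\theta$ with a $\theta$-locally-uniform integrable dominating function so that $\nabla_\theta\int f_r\,dr = \int\nabla_\theta f_r\,dr$. This last point is routine from Gaussian tail bounds, since both $r^{d-1}f_{MV}(rb,\theta|s)$ and its $\theta$-gradient decay faster than any power of $r$, uniformly for $\theta$ in a compact neighborhood. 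Everything past the identity $\psi = \tilde\psi + \psi_r$ and the zero-mean property of $\psi_r$ is bookkeeping.
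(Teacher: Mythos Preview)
Your proposal is correct. The paper's own proof is extremely terse: it simply verifies that $T$ is measurable, that $\Pi$ is $\lambda$-compatible and $T_*\Pi$ is $\sigma$-compatible, checks Conditions~\ref{asmp:action_nice}--\ref{asmp:reference_exists}, and then says ``applying Theorem~\ref{thm:variance_reduction} completes the proof.'' In particular the paper does \emph{not} spell out, inside the proof, how the abstract quantity $\cI_{\pi|s|b,\lambda_b}(q\circ T,\theta)$ furnished by Theorem~\ref{thm:variance_reduction} becomes the concrete $\Var_{\pi|b}(q_\pi(s,a)\psi_r(s,r,\theta))$ appearing in the corollary; that identification is only alluded to in the later discussion (Section~\ref{sec:mpg:disc}), where it is remarked that under polar coordinates $T$ is a coordinate projection.

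You supply exactly this missing step: the polar change of variables gives the factorization $f_{MV}\cdot r^{d-1} = f_{AG}\cdot f_r$, hence the score splitting $\psi = \tilde\psi + \psi_r$, and then $\EE_{\pi|b}[\psi_r]=0$ turns $\cI_{\pi|s|b,\lambda_b}(q\circ T,\theta)$ into $\Var_{\pi|b}(q\psi_r)$. So your argument via Theorem~\ref{thm:variance_reduction} is the paper's argument with the omitted identification filled in. Your alternative direct law-of-total-variance computation, conditioning on $(s,b)$ and using that $Z$ is $(s,b)$-measurable with $\EE[Y\mid s,b]=Z$, is a genuinely more elementary route that bypasses the general disintegration machinery of Theorem~\ref{thm:compatible_fisher_1}; it works cleanly here precisely because the polar chart makes $T$ a coordinate projection, so regular conditional densities are available without appeal to the abstract existence results.
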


\subsubsection*{Parametrized Action Spaces}
As one might expect, our variance reduction result applies to parametrized action spaces when a lossy transformation $T_i$ is applied to the parameter for discrete action $i$. See Appendix \ref{sec:more_theory:pgrad} for an in depth discussion of policy gradient methods for parametrized action spaces.

\subsection{Discussion}
\label{sec:mpg:disc}
Denoting by $g_1$ the standard policy gradient estimator for a $\lambda$-compatible family $\Pi$, observe that $\Var_{\rho,\pi}(g_1) = \cI_{\pi,\lambda}(q,\theta)$. We introduce the quantity $\cI_{\pi,\lambda}$ because unless $T$ is a coordinate projection it is not straightforward to write Theorem \ref{thm:variance_reduction} in terms of the density of a conditional distribution. Corollary \ref{cor:angular_gauss} can be written this way because under a re-parametrization to polar coordinates, $T(a) = a/||a||$ can be written as a coordinate projection. In general, by using $\cI_{\pi,\lambda}$ we can phrase the result in terms of a quantity with an intuitive interpretation: a ($q$-weighted) measure of information contained in $a$ that {\it does not influence} the environment.

Recalling the law of total variance (LOTV), we can observe that Theorem \ref{thm:variance_reduction} is indeed specific version of that general result. We can not directly apply the LOTV because in the general setting, it is highly non-trivial to conclude that $g_2$ is a version of the conditional expectation of $g_1$, and for arbitrary policies, one must be extremely careful when making the conditioning argument \citep{Chang1997}. However for certain special cases, like CAPG, we can check fairly easily that $g_2 = \EE[g_1|b]$.

\section{Applications and Discussion}
\subsection{2D Navigation Task}
\label{sec:nav2d}
Because relatively few existing reinforcement learning environments support angular actions, we implement a navigation task to benchmark our methods\footnote{We have made this environment and the implementation used for the experiments available on-line. We temporarily removed the link from this paper to preserve anonymity.}.
In this navigation task, the agent is located on a platform and must navigate from one location to another without falling off. The state space is $\cS = \RR^2$, the action space is $\cA = \RR^2$ and the transformation $T(a) = a/||a||$ is applied to actions before execution in the environment. Let $s_{G} = (1,1)$ be the goal (terminal) state. Using the {\it reward shaping} approach \citep{Ng1999}, we define a potential function $\phi(s) = ||s -s_G||_2$ and a reward function as $r(s_t,a_t) = \phi(s_t) - \phi(s_t+a_t)$. The start state is fixed at $s_0=(-1,-1)$. One corner of the platform is located at $(-1.5,-1.5)$ and the other at $(1.5,1.5)$.

We compare angular Gaussian policies with (1) bivariate Gaussian policies and (2) a 1-dimensional Gaussian policy where we model the mean of the angle directly, treating angles that differ by $2\pi$ as identical. For all candidate policies, we use A2C (the synchronous version of A3C \citep{Mnih2016}) to learn the conditional mean $m(s;\theta)$ of the sampling distribution by fitting a feed-forward neural network with tanh activations. The variance of the sampling distribution, $\sigma^2 \Ib$, is fixed. For the critic we estimate the state value function $v_{\pi}(s)$, again using a feed-forward neural network. Appendix \ref{sec:application_details:nav} for details on the hyper-parameter settings, network architecture and training procedure.

\subsection{Application -- King of Glory}
\label{sec:kog}

We implement a marginal policy gradient method for {\it King of Glory} (the North American release is titled {\it Arena of Valor}) by Tencent Games. {\it King of Glory} has several game types and we focus on the 1v1 version. Our work here is one of the first attempts to solve {\it King of Glory}, and MOBA games in general, using reinforcement learning. Similar MOBA games include Dota 2 and League of Legends.

\para{Game Description}
In {\it King of Glory}, players are divided into two ``camps'' located in opposite corners of the game map. Each player chooses a ``hero'', a character with unique abilities, and the objective is to destroy the opposing team's ``crystal'', located at their game camp. The path to each camp and crystal is guarded by towers which attack enemies when in range. Each team has a number of allied ``minions'', less powerful characters, to help them destroy the enemy crystal. Only the ``hero'' is controlled by the player. During game play,  heroes increase in level and obtain gold by killing enemies. This allows the player to upgrade the level of their hero's unique skills and buy improved equipment, resulting in more powerful attacks, increased HP, and other benefits. Figure~\ref{fig:results_all} shows {\it King of Glory} game play; in the game pictured, both players use the hero ``Di Ren Jie''.

\para{Formulation as an MDP}
$\cA$ is a parametrized action space with 7 discrete actions, 4 of which are parametrized by $\omega \in \RR^2$. These actions include move, attack, and use skills; a detailed description of all actions and parameters is given in Table \ref{tab:action}, Appendix \ref{sec:application_details:kog}. In our setup, we use rules crafted by domain experts to manage purchasing equipment and learning skills. The transformation $T(a) = a/||a||$ is applied to the action parameter before execution in the environment, so the effective action parameter spaces are $\cS^1$.

Using information obtained directly from the game engine, we construct a $2701$-dimensional state representation. Features extracted from the game engine include hero locations, hero health, tower health, skill availability and relative locations to towers and crystals -- see Appendix \ref{sec:application_details:kog} for details on the feature extraction process. As in Section \ref{sec:nav2d}, we define rewards using a potential function. In particular we define a reward feature mapping $\rho$ and a weighting vector $w$, and then a linear potential function as $\phi_r(s) = w^T\rho(s)$. Information extracted by $\rho$ includes hero health, crystal health, and game outcome; see Table \ref{tab:reward}, Appendix \ref{sec:application_details:kog} for a complete description of $w$ and $\rho$. Using $\phi_r$, we can define the reward as $r_t = \phi_r(s_{t}) - \phi_r(s_{t-1})$.

\para{Implementation}
We implement the A3C algorithm, and model both the policy $\pi$ and the value function $v_{\pi}$ using feed-forward neural networks. See Appendix \ref{sec:application_details:kog} for more details on how we model and learn the value function and policy. Using the setup described above, we compare:
\begin{enumerate}[leftmargin=*,itemsep=0pt,topsep=-3pt]
\item a standard policy gradient approach for parametrized action spaces, and
\item a marginal (angular) policy gradient approach, adapted to the parametrized action space where $T_i(a) = a/||a||$ is applied to parameter $i$.
\end{enumerate}
Additional details on both approaches can be found in Appendix \ref{sec:more_theory:pgrad}.

\subsection{Results}

\begin{figure}[h]
\begin{tabular}{l}
  \begin{tabular}{c c c}
    \hspace{-2.8em} \includegraphics[width=0.37\textwidth]{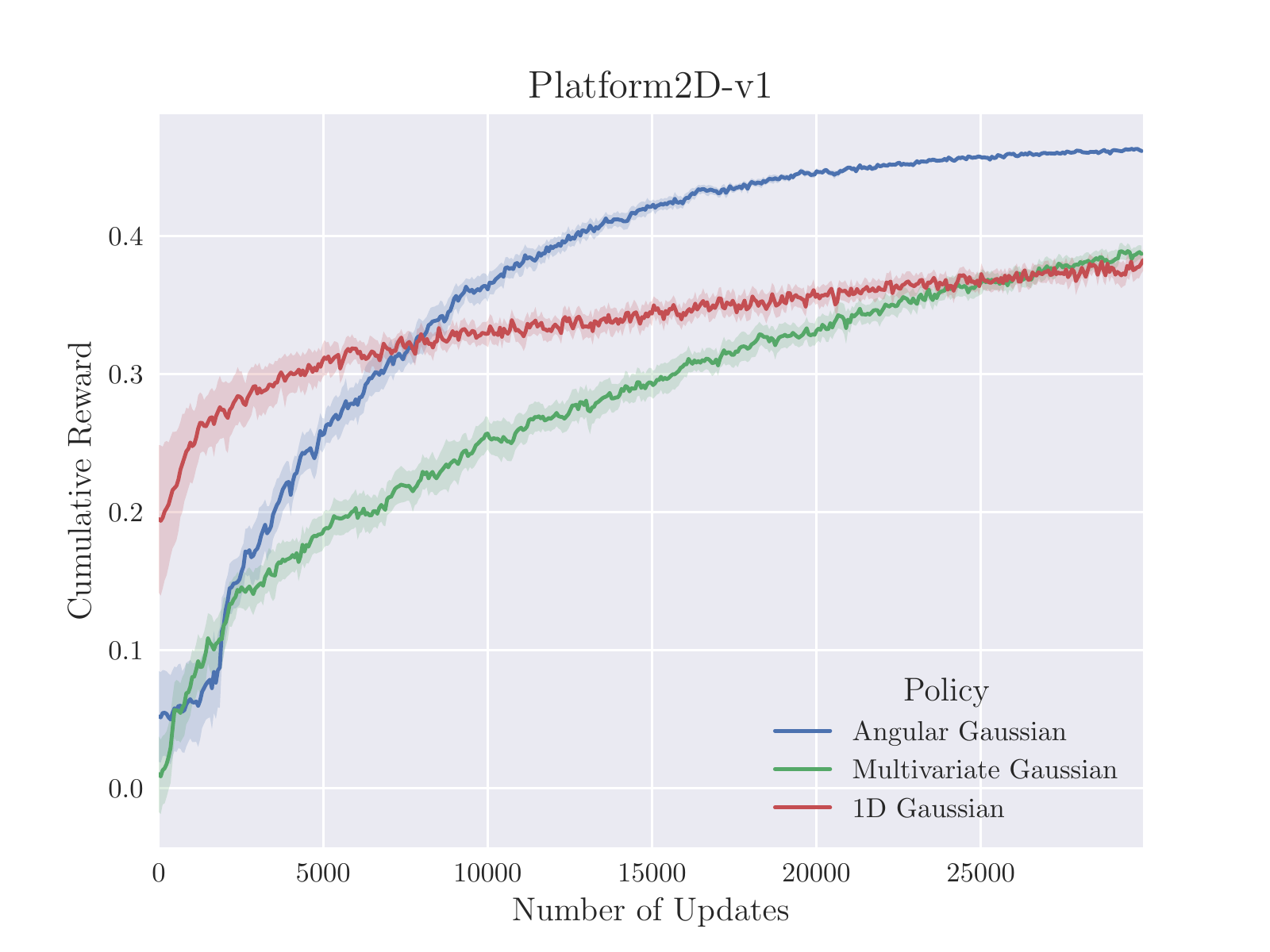} &
    \hspace{-2.4em} \includegraphics[width=0.37\textwidth]{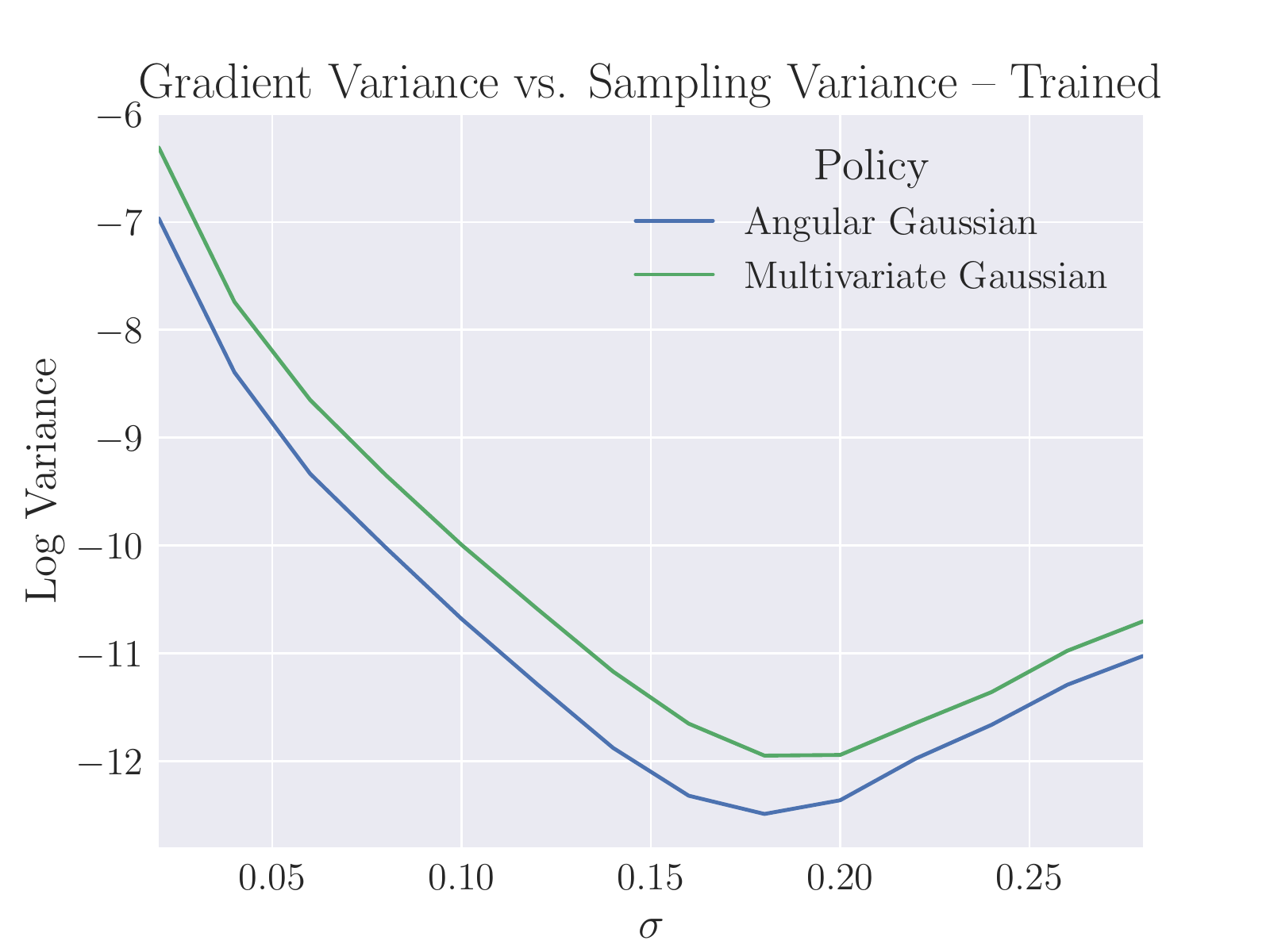} &
    \hspace{-2.4em} \includegraphics[width=0.37\textwidth]{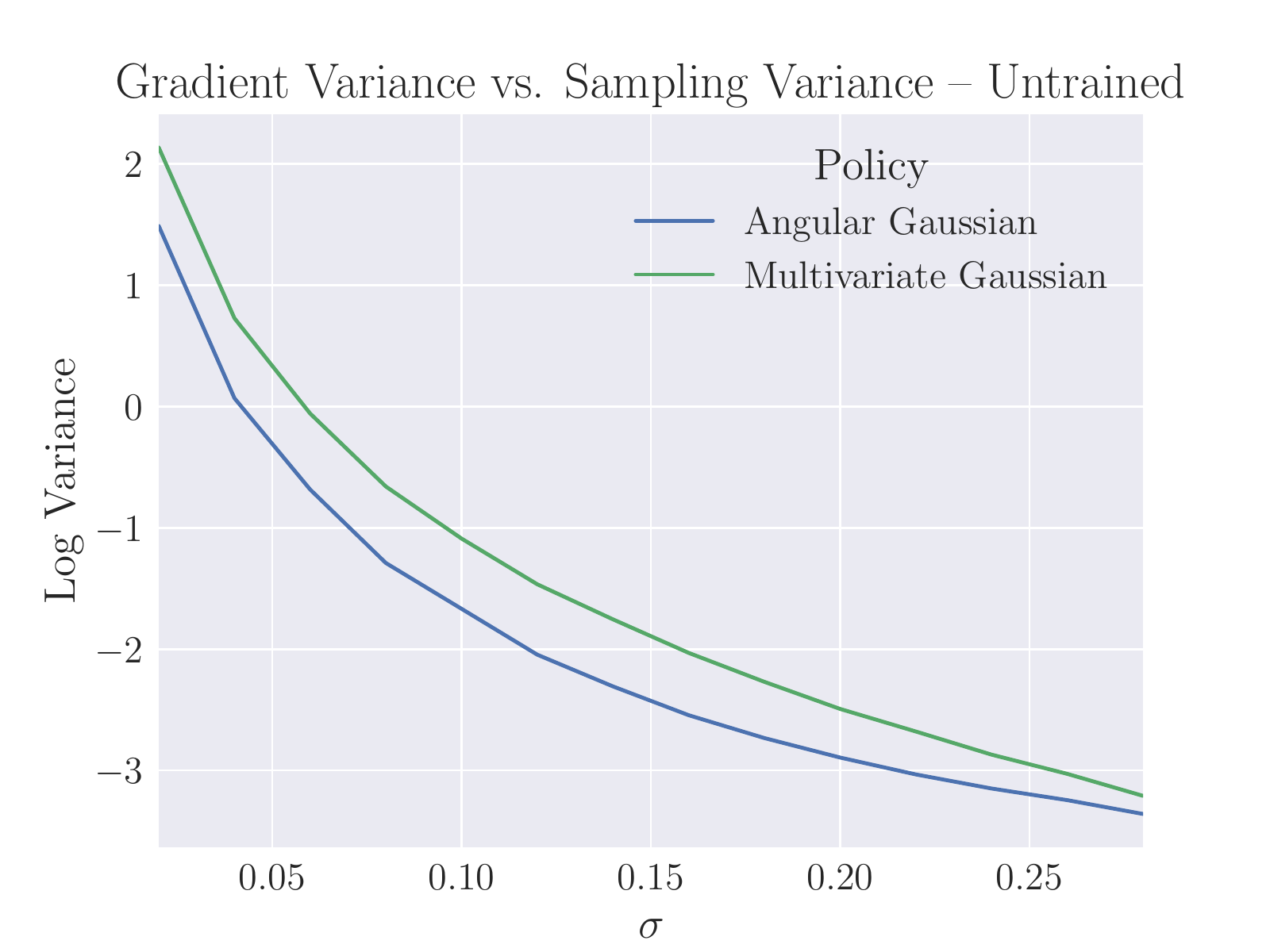}
  \end{tabular} \\
  \begin{tabular}{c c c}
	\hspace{-1.6em} \includegraphics[width=0.34\textwidth]{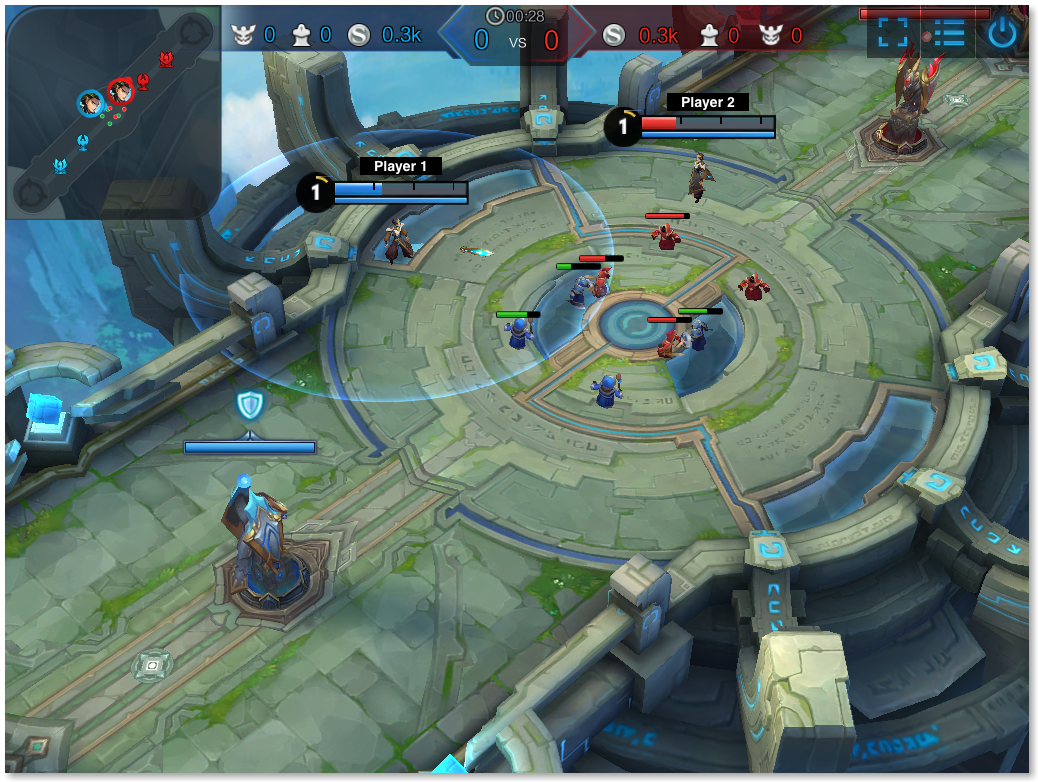} &
	\hspace{-2.4em} \includegraphics[width=0.37\textwidth]{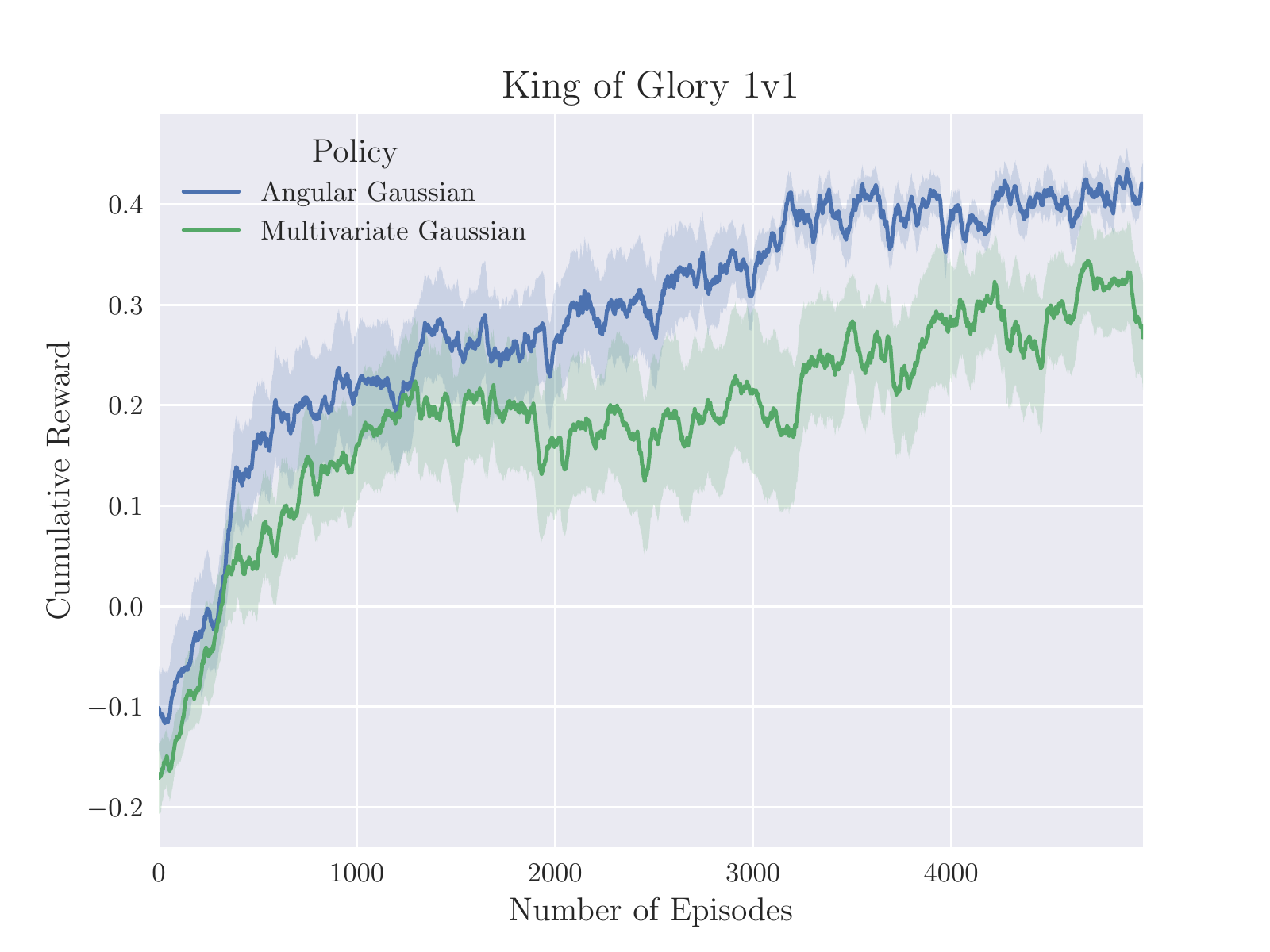} &
	\hspace{-2.4em} \includegraphics[width=0.37\textwidth]{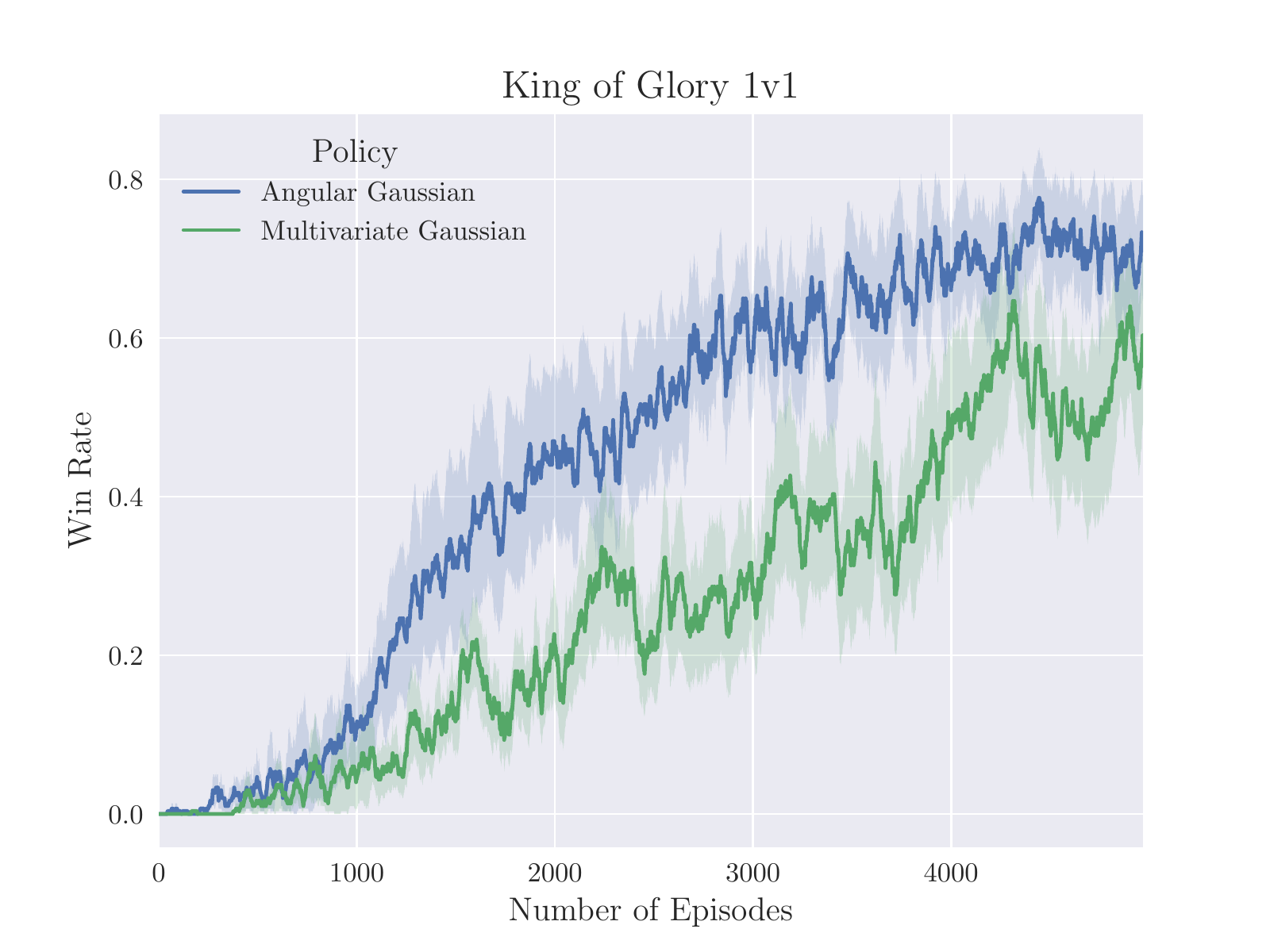}
  \end{tabular}
\end{tabular}
\caption{On top are results for Platform2D-v1; on bottom, results for King of Glory 1v1 and a screenshot of game play.}
\label{fig:results_all}
\end{figure}

For the navigation task, the top row of Figure \ref{fig:results_all} contains, from left to right, cumulative, discounted reward trajectories, and two plots showing the variances of the competing estimators. We see that the agent using the angular policy gradient converges faster compared to the multivariate Gaussian due to the variance reduced gradient estimates. The second baseline also performs worse than APG, likely due in part to the fact that the critic must approximate a periodic function. Only APG achieves the maximum possible cumulative, discounted reward. On the {\it King of Glory} 1 vs. 1 task, the agent is trained to play as the hero Di Ren Jie and training occurs by competing with the game's internal AI, also playing as Di Ren Jie. The bottom row of Figure \ref{fig:results_all} shows the results, and as before, the angular policy gradient outperforms the standard policy gradient by a significant margin both in terms of win percentage and cumulative discounted reward.

In addition, Figure \ref{fig:results_all} highlights the effects of Theorem \ref{thm:variance_reduction} in practice. The plot in the center shows the variance at the start of training, for a fixed random initialization, and the plot on the right shows the variance for a trained model that converged to the optimal policy. The main difference between the two settings is that the value function estimate $\hat v_{\pi}$ is highly accurate for the trained model (since both actor and critic have converged) and highly inaccurate for the untrained model. In both cases, we see that the variance of the marginal policy gradient estimator is roughly $\frac{1}{2}$ that of the estimator using the sampling distribution.

\subsection{Discussion}

Motivated by challenges found in complex control problems, we introduced a general family of variance reduced policy gradients estimators. This view provides the first unified approach to problems where the environment only depends on the action through some transformation $T$, and we demonstrate that CAPG and APG are members of this family corresponding to different choices of $T$. We also show that it can be applied to parametrized action spaces. Because thorough experimental work has already been done for the CAPG member of the family \citep{Fujita2018}, confirming the benefits of MPG estimators, we do not reproduce those results here. Instead we focus on  the case when $T(a) = a/||a||$ and demonstrate the effectiveness of the angular policy gradient approach on {\it King of Glory} and our own Platform2D-v1 environment. Although at this time few RL environments use directional actions, we anticipate the number will grow as RL is applied to newer and increasingly complex tasks like MOBA games where such action spaces are common. We also envision that our methods can be applied to autonomous vehicle, in particular quadcopter, control.


\bibliographystyle{iclr2019_eprint}
\bibliography{reference}

\begin{thebibliography}{29}
\providecommand{\natexlab}[1]{#1}
\providecommand{\url}[1]{\texttt{#1}}
\expandafter\ifx\csname urlstyle\endcsname\relax
  \providecommand{\doi}[1]{doi: #1}\else
  \providecommand{\doi}{doi: \begingroup \urlstyle{rm}\Url}\fi

\bibitem[Asadi et~al.(2017)Asadi, Allen, Roderick, Mohamed, Konidaris, and
  Littman]{Asadi2017}
Kavosh Asadi, Cameron Allen, Melrose Roderick, Abdel-Rahman Mohamed, George
  Konidaris, and Michael Littman.
\newblock {Mean Actor Critic}, 2017.
\newblock {\href{https://arxiv.org/abs/1709.00503}{\texttt{arXiv:1709.00503}}}.

\bibitem[Blackwell(1947)]{Blackwell1947}
David Blackwell.
\newblock {Conditional expectation and unbiased sequential estimation}.
\newblock \emph{Annals of Mathematical Statistics}, 18\penalty0 (1):\penalty0
  105--110, 1947.

\bibitem[Chang \& Pollard(1997)Chang and Pollard]{Chang1997}
J~T Chang and D~Pollard.
\newblock {Conditioning as disintegration}.
\newblock \emph{Statistica Neerlandica}, 51\penalty0 (3):\penalty0 287--317,
  1997.

\bibitem[Chou et~al.(2017)Chou, Maturana, and Scherer]{Chou2017}
Po-Wei Chou, Daniel Maturana, and Sebastian Scherer.
\newblock {Improving Stochastic Policy Gradients in Continuous Control with
  Deep Reinforcement Learning using the Beta Distribution}.
\newblock In \emph{ICML}, 2017.

\bibitem[Ciosek \& Whiteson(2018)Ciosek and Whiteson]{Ciosek2018}
Kamil Ciosek and Shimon Whiteson.
\newblock {Expected Policy Gradients for Reinforcement Learning}, 2018.
\newblock {\href{https://arxiv.org/abs/1801.03326}{\texttt{arXiv:1801.03326}}}.

\bibitem[Fellows et~al.(2018)Fellows, Ciosek, and Whiteson]{Fellows2018}
Matthew Fellows, Kamil Ciosek, and Shimon Whiteson.
\newblock {Fourier Policy Gradients}.
\newblock In \emph{ICML}, 2018.

\bibitem[Finn et~al.(2017)Finn, Abbeel, and Levine]{Finn2017}
Chelsea Finn, Pieter Abbeel, and Sergey Levine.
\newblock {Model-Agnostic Meta-Learning for Fast Adaptation of Deep Networks}.
\newblock In \emph{ICML}, 2017.

\bibitem[Florensa et~al.(2017)Florensa, Duan, and Abbeel]{Florensa2017}
Carlos Florensa, Yan Duan, and Pieter Abbeel.
\newblock {Stochastic Neural Networks for Hierarchical Reinforcement Learning}.
\newblock In \emph{ICLR}, 2017.

\bibitem[Foerster et~al.(2016)Foerster, Assael, {De Freitas}, and
  Whiteson]{Foerster2016}
Jakob~N Foerster, Yannis~M Assael, Nando {De Freitas}, and Shimon Whiteson.
\newblock {Learning to Communicate with Deep Multi-Agent Reinforcement
  Learning}.
\newblock In \emph{NIPS}, 2016.

\bibitem[Fujita \& Maeda(2018)Fujita and Maeda]{Fujita2018}
Yasuhiro Fujita and Shin-Ichi Maeda.
\newblock {Clipped Action Policy Gradient}.
\newblock In \emph{ICML}, 2018.

\bibitem[Greensmith et~al.(2004)Greensmith, Bartlett, and
  Baxter]{Greensmith2004}
Evan Greensmith, Peter~L Bartlett, and Jonathan Baxter.
\newblock {Variance Reduction Techniques for Gradient Estimates in
  Reinforcement Learning}.
\newblock \emph{Journal of Machine Learning Research}, 5:\penalty0 1471--1530,
  2004.

\bibitem[Hausknecht \& Stone(2016)Hausknecht and Stone]{Hausknecht2016}
Matthew Hausknecht and Peter Stone.
\newblock {Deep Reinforcement Learning In Parameterized Action Space}.
\newblock In \emph{ICLR}, 2016.

\bibitem[Kingma \& Ba(2015)Kingma and Ba]{Kingma2015}
Diederik~P Kingma and Jimmy~Lei Ba.
\newblock {Adam: A Method for Stochastic Optimization}.
\newblock In \emph{ICLR}, 2015.

\bibitem[Klambauer et~al.(2017)Klambauer, Unterthiner, Mayr, and
  Hochreiter]{Klambauer2017}
G{\"{u}}nter Klambauer, Thomas Unterthiner, Andreas Mayr, and Sepp Hochreiter.
\newblock {Self-Normalizing Neural Networks}.
\newblock In \emph{NIPS}, 2017.

\bibitem[Masson et~al.(2016)Masson, Ranchod, and Konidaris]{Masson2016}
Warwick Masson, Pravesh Ranchod, and George Konidaris.
\newblock {Reinforcement Learning with Parameterized Actions}.
\newblock In \emph{AAAI}, 2016.
\newblock ISBN 9781577357605.

\bibitem[Mnih et~al.(2015)Mnih, Kavukcuoglu, Silver, Rusu, Veness, Bellemare,
  Graves, Riedmiller, Fidjeland, Ostrovski, Petersen, Beattie, Sadik,
  Antonoglou, King, Kumaran, Wierstra, Legg, and Hassabis]{Mnih2015}
Volodymyr Mnih, Koray Kavukcuoglu, David Silver, Andrei~A Rusu, Joel Veness,
  Marc~G Bellemare, Alex Graves, Martin Riedmiller, Andreas~K Fidjeland, Georg
  Ostrovski, Stig Petersen, Charles Beattie, Amir Sadik, Ioannis Antonoglou,
  Helen King, Dharshan Kumaran, Daan Wierstra, Shane Legg, and Demis Hassabis.
\newblock {Human-level control through deep reinforcement learning}.
\newblock \emph{Nature}, \penalty0 (518):\penalty0 529--533, 2015.
\newblock \doi{10.1038/nature14236}.

\bibitem[Mnih et~al.(2016)Mnih, {Puigdom{\`{e}}nech Badia}, Mirza, Graves,
  Harley, Lillicrap, Silver, Kavukcuoglu, Com, and Deepmind]{Mnih2016}
Volodymyr Mnih, Adri{\`{a}} {Puigdom{\`{e}}nech Badia}, Mehdi Mirza, Alex
  Graves, Tim Harley, Timothy~P Lillicrap, David Silver, Koray Kavukcuoglu,
  Korayk@google Com, and Google Deepmind.
\newblock {Asynchronous Methods for Deep Reinforcement Learning}.
\newblock In \emph{ICML}, 2016.

\bibitem[Ng et~al.(1999)Ng, Harada, and Russell]{Ng1999}
Andrew Ng, Daishi Harada, and Stuart Russell.
\newblock {Policy invariance under reward transformations: Theory and
  application to reward shaping}.
\newblock In \emph{ICML}, 1999.

\bibitem[Paine et~al.(2018)Paine, Preston, Tsagris, and Wood]{Paine2018}
P.~J. Paine, S.~P. Preston, M.~Tsagris, and Andrew T.~A. Wood.
\newblock {An elliptically symmetric angular Gaussian distribution}.
\newblock \emph{Statistics and Computing}, 28:\penalty0 689--697, 2018.
\newblock \doi{10.1007/s11222-017-9756-4}.

\bibitem[Paszke et~al.(2017)Paszke, Chanan, Lin, Gross, Yang, Antiga, and
  Devito]{Paszke2017}
Adam Paszke, Gregory Chanan, Zeming Lin, Sam Gross, Edward Yang, Luca Antiga,
  and Zachary Devito.
\newblock {Automatic differentiation in PyTorch}.
\newblock In \emph{NIPS Workshop}, 2017.

\bibitem[Schulman et~al.(2015)Schulman, Levine, Moritz, Jordan, and
  Abbeel]{Schulman2015}
John Schulman, Sergey Levine, Philipp Moritz, Michael Jordan, and Pieter
  Abbeel.
\newblock {Trust Region Policy Optimization}.
\newblock In \emph{ICML}, 2015.

\bibitem[Schulman et~al.(2016)Schulman, Moritz, Levine, Jordan, and
  Abbeel]{Schulman2016}
John Schulman, Philipp Moritz, Sergey Levine, Michael Jordan, and Pieter
  Abbeel.
\newblock {High-Dimensional Continuous Control Using Generalized Advantage
  Estimation}.
\newblock In \emph{ICLR}, 2016.

\bibitem[Schulman et~al.(2017)Schulman, Wolski, Dhariwal, Radford, and
  Openai]{Schulman2017}
John Schulman, Filip Wolski, Prafulla Dhariwal, Alec Radford, and Oleg~Klimov
  Openai.
\newblock {Proximal Policy Optimization Algorithms}, 2017.
\newblock {\href{https://arxiv.org/abs/1707.06347}{\texttt{arXiv:1707.06347}}}.

\bibitem[Silver et~al.(2014)Silver, Heess, Degris, Wierstra, and
  Riedmiller]{Silver2014}
David Silver, Nicolas Heess, Thomas Degris, Daan Wierstra, and Martin
  Riedmiller.
\newblock {Deterministic Policy Gradient Algorithms}.
\newblock In \emph{ICML}, 2014.

\bibitem[Sutton \& Barto(2018)Sutton and Barto]{Sutton2018}
Richard~S Sutton and Andrew~G Barto.
\newblock \emph{{Reinforcement learning: an introduction.}}
\newblock 2018.
\newblock ISBN 0262193981.
\newblock \doi{10.1109/TNN.1998.712192}.

\bibitem[Sutton et~al.(2000)Sutton, Mcallester, Singh, and Mansour]{Sutton2000}
Richard~S Sutton, David Mcallester, Satinder Singh, and Yishay Mansour.
\newblock {Policy Gradient Methods for Reinforcement Learning with Function
  Approximation}.
\newblock In \emph{NIPS}, 2000.

\bibitem[Tamar et~al.(2012)Tamar, {Di Castro}, and Meir]{Tamar2012}
Aviv Tamar, Dotan {Di Castro}, and Ron Meir.
\newblock {Integrating a partial model into model free reinforcement learning}.
\newblock \emph{Journal of Machine Learning Research}, 13:\penalty0 1927--1966,
  2012.
\newblock ISSN 15324435.

\bibitem[Usunier et~al.(2017)Usunier, Synnaeve, Lin, and Chintala]{Usunier2017}
Nicolas Usunier, Gabriel Synnaeve, Zeming Lin, and Soumith Chintala.
\newblock {Episodic Exploration for Deep Deterministic Policies: An Application
  to StarCraft Micromanagement Tasks}.
\newblock In \emph{ICLR}, 2017.

\bibitem[Vinyals et~al.(2017)Vinyals, Ewalds, Bartunov, Georgiev, Vezhnevets,
  Yeo, Makhzani, Uttler, Agapiou, Schrittwieser, Quan, Gaffney, Petersen,
  Simonyan, Schaul, {Van Hasselt}, Silver, Lillicrap, Calderone, Keet,
  Brunasso, Lawrence, Ekermo, Repp, and Blizzard]{Vinyals2017}
Oriol Vinyals, Timo Ewalds, Sergey Bartunov, Petko Georgiev, Alexander~Sasha
  Vezhnevets, Michelle Yeo, Alireza Makhzani, Heinrich Uttler, John Agapiou,
  Julian Schrittwieser, John Quan, Stephen Gaffney, Stig Petersen, Karen
  Simonyan, Tom Schaul, Hado {Van Hasselt}, David Silver, Timothy Lillicrap,
  Deepmind~Kevin Calderone, Paul Keet, Anthony Brunasso, David Lawrence, Anders
  Ekermo, Jacob Repp, and Rodney~Tsing Blizzard.
\newblock {StarCraft II: A New Challenge for Reinforcement Learning}, 2017.
\newblock {\href{https://arxiv.org/abs/1708.04782}{\texttt{arXiv:1708.04782}}}.

\end{thebibliography}

\appendix
\clearpage
\section{Additional Preliminaries}
\label{sec:more_prelim}
This section contains additional preliminary material and discussion thereof.

\subsection{Discussion -- Stochastic Policy Gradients}
\label{sec:more_prelim:pgrad}
We require a stochastic policy gradient theorem that can be applied to distributions on arbitrary measurable spaces in order to rigorously analyze the Marginal Policy Gradients framework. Let the notation be as in Section \ref{sec:preliminaries}. The first ingredient is Proposition \ref{prop:generic_pgrad}, which gives a very general form of policy gradient, defined for an arbitrary probability measure.
\begin{proposition}
\label{prop:generic_pgrad}[\cite{Ciosek2018}]
Let $\pi(\cdot|s)$ be a probability measure on $(\cA,\cE)$, then
\[
\nabla \eta = \int_{\cS}d\rho_{\pi}(s)\left[\nabla v_{\pi}(s) - \int_{\cA}d\pi(a|s)\nabla q_{\pi}(s,a) \right].
\]
\end{proposition}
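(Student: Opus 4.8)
The plan is to unroll the Bellman recursion for $\nabla v_\pi$ and recognize the discounted occupancy measure $\rho_\pi$ in the resulting series. Writing $\eta(\pi) = \int_\cS v_\pi(s)\, dp_0(s)$ and using that rewards are bounded and $\gamma < 1$ — so that $v_\pi(s)$ and $q_\pi(s,a)$ are well-behaved real-valued functions of $\btheta$ — I would differentiate under the integral sign to get $\nabla\eta = \int_\cS \nabla v_\pi(s)\, dp_0(s)$, reducing everything to an expression for $\nabla v_\pi(s)$. The key structural point is that we never differentiate the \emph{measure} $\pi(\cdot|s)$, only the scalar functions $v_\pi$ and $q_\pi$; this is exactly why no density or $\mu$-compatibility hypothesis is needed for this "generic" form.

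Next I would isolate $h(s) := \nabla v_\pi(s) - \int_\cA \nabla q_\pi(s,a)\, d\pi(a|s)$, which is precisely the integrand in the claim. From the Bellman identity $q_\pi(s,a) = r(s,a) + \gamma\int_\cS v_\pi(s')\, p(s,a,ds')$ and the fact that $r$ does not depend on $\btheta$, differentiating gives $\nabla q_\pi(s,a) = \gamma\int_\cS \nabla v_\pi(s')\, p(s,a,ds')$. Substituting into the definition of $h$ yields the fixed-point relation
\[
\nabla v_\pi(s) = h(s) + \gamma \int_\cA d\pi(a|s)\int_\cS \nabla v_\pi(s')\, p(s,a,ds') = h(s) + \gamma \int_\cS \nabla v_\pi(s')\, P_\pi(s,ds'),
\]
where $P_\pi(s,ds') := \int_\cA p(s,a,ds')\, d\pi(a|s)$ is the state-to-state transition kernel induced by $\pi$.

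Then I would iterate this relation. With $P_\pi^t$ the $t$-step kernel, $n$ substitutions give $\nabla v_\pi(s) = \sum_{t=0}^{n-1}\gamma^t \int_\cS h(s')\, P_\pi^t(s,ds') + \gamma^n \int_\cS \nabla v_\pi(s')\, P_\pi^n(s,ds')$; since $\|\nabla v_\pi\|$ is uniformly bounded, the remainder is $O(\gamma^n)\to 0$ and the series converges, so $\nabla v_\pi(s) = \sum_{t\ge 0}\gamma^t \int_\cS h(s')\, P_\pi^t(s,ds')$. Integrating against $p_0$ and interchanging sum and integral (dominated convergence),
\[
\nabla\eta = \sum_{t\ge 0}\gamma^t \int_\cS \int_\cS h(s')\, P_\pi^t(s,ds')\, dp_0(s) = \int_\cS h(s')\, d\rho_\pi(s'),
\]
by the definition $\rho_\pi = \sum_t \gamma^t \EE_{p_0}[\PP(s_t \in \cdot \mid s_0,\pi)]$; unwinding $h$ gives the stated formula.

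The main obstacle I expect is the bookkeeping around the interchanges in the measure-theoretic setting: justifying differentiation under each integral sign via uniform-in-$\btheta$ domination of $v_\pi$ and $q_\pi$, verifying measurability of $s\mapsto\nabla v_\pi(s)$ so the iterated-kernel and Fubini steps are legitimate, and confirming the $\gamma^n$-remainder vanishes. All of these rest on the standing bounded-reward assumption together with the assumed $\btheta$-differentiability of $v_\pi$ and $q_\pi$; once they are in place, the telescoping that produces $\rho_\pi$ is routine.
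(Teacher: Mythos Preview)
Your argument is correct and is essentially the standard derivation: define the residual $h(s)=\nabla v_\pi(s)-\int_\cA \nabla q_\pi(s,a)\,d\pi(a|s)$, use the Bellman identity to turn this into the fixed-point equation $\nabla v_\pi = h + \gamma P_\pi \nabla v_\pi$, unroll, and identify the series with $\rho_\pi$. The emphasis you place on never differentiating the measure $\pi$ is exactly the reason this proposition holds without any $\mu$-compatibility hypothesis.

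There is, however, nothing to compare against in this paper: Proposition~\ref{prop:generic_pgrad} is quoted from \cite{Ciosek2018} and is not proved here. The paper only uses it as a black box to derive Theorem~\ref{thm:pgrad} (by substituting the $\mu$-compatible expansion of $\nabla v_\pi$ into the bracket). So your write-up is not a reproduction of the paper's proof but rather a self-contained justification of a result the paper imports; as such it is fine, and the regularity caveats you flag (uniform bounds on $\nabla v_\pi$ to kill the $\gamma^n$ remainder, Leibniz/Fubini interchanges) are the right ones to record.
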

This is an important step towards the form of stochastic policy gradient theorem we need in order to present our unified analysis that includes measures with uncountable support and also those which do not admit a density with respect to Lebesgue measure -- something frequently encountered in practice. To obtain a stochastic policy gradient theorem from Proposition \ref{prop:generic_pgrad} we simply need to replace $\nabla v_{\pi}(s)$ with an appropriate expression. As in \cite{Ciosek2018}, we need to be able to justify an interchange along the lines of
\begin{equation}
\label{eqn:pgrad_interchange}
\nabla v_{\pi} = \nabla \int_{\cA}d\pi(a|s)q_{\pi}(s,a) = \int_{\cA}da \nabla \pi(a|s) q_{\pi}(s,a) + \int_{\cA}d\pi(a|s)\nabla q_{\pi}(s,a).
\end{equation}
Such an expression doesn't make sense for arbitrary $\pi$, so we must be precise regarding the conditions under which such an expression makes sense and the interchange is permitted, hence Definition \ref{def:compatible_measure}. Because we did not find a statement with the sort of generality we required in the literature, we give a proof of our statement the stochastic policy gradient theorem, Theorem \ref{thm:pgrad}, below.

\begin{proof}[Proof of Theorem \ref{thm:pgrad}]
The proof follows standard arguments. Because $\Pi$ is $\mu$-compatible we obtain that
\begin{align*}
\nabla v_{\pi} &= \nabla \int_{\cA}d\pi(a|s)q_{\pi}(s,a) \\
&= \int_{\cA}\nabla \left[ q_{\pi}(s,a) f_{\pi}(a|s) \right]d\mu \\
&= \int_{\cA}q_{\pi}(s,a) \nabla f_{\pi}(a|s) d\mu + \int_{\cA}\nabla q_{\pi}(s,a) d\pi(a|s).
\end{align*}
The result now follows immediately from Proposition \ref{prop:generic_pgrad}.
\end{proof}

\subsection{Disintegration Theorems}
\label{sec:more_prelim:disintegration}
The definitions and propositions below are from \cite{Chang1997}, which we include here for completeness. Let $(\cA,\cE,\lambda)$ be a measure space and $(\cB,\cF)$ a measurable space. Let $\lambda$ be a $\sigma$-finite measure on $\cE$ and $\mu$ be a $\sigma$-finite measure on $\cF$.
\begin{definition}[$(T,\mu)$-disintegration, \cite{Chang1997}]
\label{def:disintegration}
The measure $\lambda$ has a $(T,\mu)$-disintegration, denoted $\{\lambda_b \}$ if for all nonnegative measurable $f$ on $\cA$
\begin{itemize}
\item $\lambda_b$ is a $\sigma$-finite measure on $\cE$ that is concentrated on $E_b := \{T = b\}$ in the sense that $\int_{\cA}\II[\cA \setminus E_b] d\lambda_b = 0$ for $\mu$-almost all $b$,
\item the function $b \rightarrow \int_{T^{-1}(b)} f d\lambda_b$ is measurable, and
\item $\int_{\cA} f d\lambda = \int_{\cB}\int_{T^{-1}(b)} f d\lambda_b d\mu$.
\end{itemize}
\end{definition}
\noindent If $\mu = T_*\lambda$, then we call ${\lambda_b}$ a $T$-disintegration. With some additional assumptions, we have the existence theorem given below.
\begin{proposition}[Existence, \cite{Chang1997}]
\label{prop:disint_exists}
Let $\cA$ be a metric space, $\lambda$ be a $\sigma$-finite Radon measure, and  $\mu$ be a $\sigma$-finite measure such that $T_*\lambda \ll \mu$. If $\cF$ is countably generated and contains the singleton sets $\{b\}$, then $\lambda$ has a $(T,\mu)$-disintegration. The measures $\{ \lambda_b \}$ are unique up to an almost-sure equivalence in that if $\{\lambda_b^*\}$ is another $(T,\mu)$-disintegration, $\mu(\{b : \lambda_b \neq \lambda_b^* \}) = 0$.
\end{proposition}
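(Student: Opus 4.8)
This is the Chang--Pollard disintegration theorem, so my plan is to reproduce its argument, which splits into an existence construction and an essential-uniqueness clause. For existence I would first reduce to the case that $\lambda$ (and $\mu$) are probability measures: since $\lambda$ is $\sigma$-finite there is a strictly positive measurable $w$ with $\int_\cA w\,d\lambda < \infty$, and a $(T,\mu)$-disintegration $\{(w\lambda)_b\}$ of $w\lambda$ yields $\lambda_b := w^{-1}\cdot (w\lambda)_b$ for $\lambda$; since a disintegration is only determined modulo $\mu$-null sets, $\mu$ may be replaced by an equivalent finite measure without loss of generality. Using that $\cF$ is countably generated, write $\cF = \sigma(F_1,F_2,\dots)$, let $\cF_n$ be the finite algebra generated by $F_1,\dots,F_n$ (with atoms partitioning $\cB$), and note $\sigma(\bigcup_n \cF_n) = \cF$. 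For fixed $E \in \cE$ the map $F \mapsto \lambda(E \cap T^{-1}(F))$ is a finite measure on $\cF$ absolutely continuous with respect to $\mu$ (because $T_*\lambda \ll \mu$), with density $h_E$; on each atom $G$ of $\cF_n$ with $\mu(G) > 0$ I set $\lambda^{(n)}_b(E) := \lambda(E \cap T^{-1}(G))/\mu(G) = \EE_\mu[h_E \mid \cF_n](b)$, a uniformly integrable $\{\cF_n\}$-martingale, which therefore converges $\mu$-a.e.\ to $h_E$.

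The hard part is the next step: upgrading the a.e.-defined limits $\{h_E\}_{E\in\cE}$ into a single $\mu$-a.e.-defined assignment $b \mapsto \lambda_b$ whose values are genuine $\sigma$-finite measures on $\cE$. This is exactly where the metric-space and Radon hypotheses (Condition \ref{asmp:action_nice}) are used: one restricts to a countable family of compact subsets of $\cA$ which, by inner regularity of $\lambda$, is determining, defines $\lambda_b$ on this family via the martingale limits, verifies finite additivity off a single $\mu$-null set, and extends $\lambda_b$ to a Radon measure by inner regularity. Once this is done, checking Definition \ref{def:disintegration} is routine: identity (iii), $\int_\cA f\,d\lambda = \int_\cB \int_{T^{-1}(b)} f\,d\lambda_b\,d\mu$, holds for indicators by construction and extends by monotone convergence; measurability (ii) is inherited through a monotone-class argument; and concentration on $E_b = \{T=b\}$ (property (i)) follows by plugging $E = T^{-1}(F^c)$ into $\int_F \lambda_b(E)\,d\mu = \lambda(E \cap T^{-1}(F))$ to get $\lambda_b(T^{-1}(F^c)) = 0$ for $\mu$-a.e.\ $b \in F$, then intersecting over a countable point-separating subfamily of $\cF$ (available since $\cF$ is countably generated and contains the singletons, Condition \ref{asmp:image_nice}) to force $\lambda_b$ onto $T^{-1}(b)$.

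For the uniqueness clause, given a second $(T,\mu)$-disintegration $\{\lambda_b^*\}$ I would apply property (iii) to $f = \II_E\,\II_{T^{-1}(F)}$ and use concentration on $\{T=b\}$ to obtain $\int_F \lambda_b(E)\,d\mu = \lambda(E \cap T^{-1}(F)) = \int_F \lambda_b^*(E)\,d\mu$ for every $E \in \cE$ and $F \in \cF$; hence $\lambda_b(E) = \lambda_b^*(E)$ for $\mu$-almost every $b$, with an exceptional set $N_E$. Running $E$ over the countable determining class of compact sets from the existence step and setting $N := \bigcup_E N_E$ (still $\mu$-null), inner regularity gives $\lambda_b = \lambda_b^*$ for $b \notin N$; a standard exhaustion argument undoes the reduction to finite $\lambda$, yielding $\mu(\{b : \lambda_b \neq \lambda_b^*\}) = 0$.
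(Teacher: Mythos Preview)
The paper does not actually prove this proposition: it is quoted verbatim from \cite{Chang1997} in Appendix~\ref{sec:more_prelim:disintegration} under the heading ``The definitions and propositions below are from \cite{Chang1997}, which we include here for completeness,'' with no accompanying argument. So there is no in-paper proof to compare against; the paper simply invokes the result as a black box in the proof of Theorem~\ref{thm:compatible_fisher_1}.

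Your sketch is a faithful outline of the Chang--Pollard argument itself: the reduction to finite $\lambda$ via a positive integrable weight, the martingale construction $\lambda^{(n)}_b(E) = \EE_\mu[h_E \mid \cF_n]$ along a filtration generated by $\cF$, the use of the Radon/metric hypotheses to promote the a.e.\ limits to genuine measures via a countable determining class of compacts, and the uniqueness via the same class. That is exactly how the cited source proceeds, so your proposal is correct and aligned with the intended reference, even though the present paper offers nothing beyond the citation.
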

\noindent Lastly, we have Proposition \ref{prop:disint_props} which characterizes the properties of disintegrations and how they relate to densities and push-forward measures.
\begin{proposition}[\cite{Chang1997}]
\label{prop:disint_props}
Let $\lambda$ have a $(T,\mu)$-disintegration $\{\lambda_b \}$, and let $\rho$ be absolutely continuous with respect to $\lambda$ with a finite density $r(a)$, where each of $\lambda$, $\mu$ and $\rho$ is $\sigma$-finite. Then
\begin{itemize}
\item $\rho$ has a $(T,\mu)$-disintegration $\{\rho_b\}$ where $\rho_b \ll \lambda_b$ with density $r(a)$,
\item $T_*\rho \ll \mu$ with density $r_T(b) := \int_{T^{-1}(b)} r(a) d\lambda_b$,
\item the measures $\{\rho_b\}$ are finite for $\mu$ almost all $b$ if and only if $T_*\rho$ is $\sigma$-finite,
\item the measures $\{\rho_b\}$ are probabilities for $\mu$ almost all $b$ if and only if $\mu = T_*\rho$, and
\item if $T_*\rho$ is $\sigma$-finite, then $T_*\rho(\{b : r_T(b) = 0\}) = 0$ and $T_*\rho(\{b : r_T(b) = \infty \}) = 0$. For $T_*\rho$-almost all $b$, the measures $\{\tilde \rho_b \}$ defined by
\[
\int_{T^{-1}(b)} f(a) d\tilde\rho_b = \int_{T^{-1}(b)} f(a) r_{a|b}(a) d\lambda_b \text{~~and~~} r_{a|b}(a) := \II[0 < r_T(b) < \infty] \frac{r(a)}{r_T(b)},
\]
are probability measures that give a $T$-disintegration of $\rho$.
\end{itemize}
\end{proposition}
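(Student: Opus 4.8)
The plan is to verify the five bullets in order, working directly from Definition \ref{def:disintegration} and the Radon--Nikodym theorem, since each bullet feeds into the next. For the first bullet, I would define $\rho_b(E) := \int_E r(a)\, d\lambda_b(a)$ for $E \in \cE$ and check the three clauses of the disintegration definition: concentration of $\rho_b$ on $E_b$ is inherited from $\lambda_b$ (the defining integral only sees $E_b$); $\sigma$-finiteness of $\rho_b$ follows from that of $\lambda_b$ by intersecting an exhausting sequence with the sets $\{r \le n\}$, using that $r$ is finite; and measurability of $b \mapsto \int_{T^{-1}(b)} f\, d\rho_b = \int_{T^{-1}(b)} fr\, d\lambda_b$ together with the identity $\int f\, d\rho = \int fr\, d\lambda = \int_{\cB} \int_{T^{-1}(b)} fr\, d\lambda_b\, d\mu$ are just the corresponding properties of $\{\lambda_b\}$ applied to the nonnegative measurable function $fr$. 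For the second bullet, for $F \in \cF$ I would write $T_*\rho(F) = \int \II[T(a) \in F]\, r(a)\, d\lambda$ and disintegrate; since $\lambda_b$ is concentrated on $\{T = b\}$, the indicator equals $\II[b \in F]$ $\lambda_b$-a.e., so $T_*\rho(F) = \int_F \left( \int_{T^{-1}(b)} r\, d\lambda_b \right) d\mu = \int_F r_T\, d\mu$, which gives $T_*\rho \ll \mu$ with density $r_T$ (measurability of $r_T$ again from the disintegration).

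The key observation for the third and fourth bullets is that $\rho_b(\cA) = \rho_b(E_b) = \int_{T^{-1}(b)} r\, d\lambda_b = r_T(b)$, so $\rho_b$ is finite exactly when $r_T(b) < \infty$ and is a probability exactly when $r_T(b) = 1$. For the third bullet: if $r_T < \infty$ $\mu$-a.e., then $T_*\rho$, having density $r_T$ against the $\sigma$-finite $\mu$, is $\sigma$-finite (exhaust $\cB$ by $\mu$-finite sets intersected with $\{r_T \le n\}$, discarding the $\mu$-null, hence $T_*\rho$-null, set $\{r_T = \infty\}$); conversely, if $T_*\rho$ is $\sigma$-finite, covering $\cB$ by sets of finite $T_*\rho$-measure forces $\int_\cdot r_T\, d\mu < \infty$ locally, hence $r_T < \infty$ $\mu$-a.e. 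For the fourth bullet: if $\mu = T_*\rho$ then $r_T$ is a density of $\mu$ against itself, so $r_T = 1$ $\mu$-a.e.; conversely $r_T = 1$ $\mu$-a.e. gives $T_*\rho(F) = \int_F r_T\, d\mu = \mu(F)$ for every $F$.

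Assuming $T_*\rho$ is $\sigma$-finite for the fifth bullet, $T_*\rho(\{r_T = 0\}) = \int_{\{r_T = 0\}} r_T\, d\mu = 0$ trivially, and $T_*\rho(\{r_T = \infty\}) = 0$ since the third bullet gives $\mu(\{r_T = \infty\}) = 0$ and $T_*\rho \ll \mu$. On the co-$T_*\rho$-null set $\{0 < r_T < \infty\}$ we have $\tilde\rho_b = r_T(b)^{-1}\rho_b$, a rescaling of the finite measure $\rho_b$, so $\tilde\rho_b(\cA) = r_T(b)^{-1}\rho_b(\cA) = 1$, while on the complement $\tilde\rho_b \equiv 0$, which is harmless since that set of $b$ is $T_*\rho$-null. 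To see $\{\tilde\rho_b\}$ is a $(T, T_*\rho)$-disintegration of $\rho$, I would verify Definition \ref{def:disintegration} with $\mu$ replaced by $T_*\rho$: concentration and $\sigma$-finiteness are immediate, measurability of $b \mapsto \int_{T^{-1}(b)} f\, d\tilde\rho_b = \II[0 < r_T(b) < \infty]\, r_T(b)^{-1} \int_{T^{-1}(b)} fr\, d\lambda_b$ follows from the earlier measurability facts, and for the integral identity use $d(T_*\rho) = r_T\, d\mu$ from the second bullet: the factor $r_T(b)$ cancels $r_T(b)^{-1}$ $T_*\rho$-a.e., collapsing the double integral to $\int_\cB \int_{T^{-1}(b)} fr\, d\lambda_b\, d\mu = \int_\cA fr\, d\lambda = \int_\cA f\, d\rho$; the contributions from $\{r_T = 0\}$ (where $r = 0$ $\lambda_b$-a.e.) and $\{r_T = \infty\}$ (which is $\mu$-null) both vanish.

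The routine calculations here are straightforward; the care-intensive part is the bookkeeping of null sets — distinguishing $\mu$-a.e.\ from $T_*\rho$-a.e.\ statements and treating $\{r_T = 0\}$ and $\{r_T = \infty\}$ consistently so the disintegration identity in the fifth bullet is not spoiled on a $T_*\rho$-null set — together with establishing that the $\sigma$-finiteness of $T_*\rho$ is genuinely \emph{equivalent} to (not merely implied by) the finiteness of the $\rho_b$ in the third bullet, which is where both directions of the covering argument must be run.
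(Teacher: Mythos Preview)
The paper does not prove this proposition: it is quoted verbatim from \cite{Chang1997} in Appendix~\ref{sec:more_prelim:disintegration} ``for completeness'' and is used as a black box in the proof of Theorem~\ref{thm:compatible_fisher_1}. So there is no in-paper argument to compare against.

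That said, your proposal is correct and is the natural way to establish the result. The chain of dependencies you set up (define $\rho_b$ by $d\rho_b = r\,d\lambda_b$; identify $\rho_b(\cA) = r_T(b)$; read off bullets three and four from this identity; then normalise) is exactly how Chang and Pollard organise it. Your handling of the converse in the third bullet is the only place that needs a touch more care in writing: from $\sigma$-finiteness of $T_*\rho$ you want $\mu(\{r_T=\infty\})=0$, and the cleanest way is to note that for any $F$ with $T_*\rho(F)<\infty$ one has $\int_{F\cap\{r_T=\infty\}} r_T\,d\mu \le T_*\rho(F) < \infty$, forcing $\mu(F\cap\{r_T=\infty\})=0$; then take a countable cover. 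Your ``locally finite integral hence finite a.e.'' phrasing is a slight compression of this, but the idea is right. The null-set bookkeeping in the fifth bullet is handled correctly.
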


\clearpage
\section{Theory and Methodology}
\label{sec:more_theory}
This section contains additional theoretical and methodology results, including our crucial scaled Fisher information decomposition theorem.

\subsection{Angular Policy Gradient}
\label{sec:more_theory:angular}
Algorithm \ref{alg:m_function} shows how to compute $\cM_{d}(\alpha)$, allowing us to easily find the angular policy gradient.
\begin{algorithm}[H]
\caption{Computing $\cM_{d}(\alpha)$ for Angular Policy Gradient}
\label{alg:m_function}
\begin{algorithmic}[1]
\Input $d$, $\alpha$
\Output $\cM_d(\alpha)$
\State $M_0 \gets \Phi(\alpha)$
\State $M_1 \gets \alpha\Phi(\alpha) + \phi(\alpha)$
\If{$d > 1$}
    \For{$i = 2,\dots,d$}
        \State $M_i \gets \alpha M_{i-1} + d M_{i-2}$
    \EndFor
\EndIf
\State \Return $M_d$
\end{algorithmic}
\end{algorithm}

\subsection{Fisher Information Decomposition}
\label{sec:more_theory:fisher_decomp}
Using the disintegration results stated in Appendix \ref{sec:more_prelim:disintegration}, we now can state and prove our key decomposition result, Theorem \ref{thm:compatible_fisher_1}, used in the proof of our main result.

\begin{theorem}[Fisher Information Decomposition]
\label{thm:compatible_fisher_1}
Let $(\cA,\cE,\lambda)$ be a measure space, $(\cB,\cF)$ be a measurable space, $T:\cA \rightarrow \cB$ be a measurable, surjective mapping, and $q$ a measurable function on $\cF$. Consider a $\lambda$-compatible family of probability measures $\Pi= \{ \pi(\cdot,\theta) : \theta \in \Theta \}$ on $\cE$ and denote $T_*\Pi := \{ T_*\pi(\cdot,\theta) : \theta \in \Theta \}$, a family of measures on $\cF$. If
\begin{enumerate}[label={(\alph*)}]
\item $\cA$ is a metric space, $\lambda$ is a Radon measure, and $T_*\lambda \ll \mu$ for a $\sigma$-finite measure $\mu$ on $\cF$;
\item $\cF$ is countably generated and contains the singleton sets $\{b\}$;
\item $T_*\Pi$ is a $\mu$-compatible family for a measure $\mu$ on $\cF$;
\end{enumerate}
then
\begin{enumerate}
\item $\lambda$ has a $(T,\mu)$-disintegration $\{\lambda_b\}$;
\item $\Pi|b$ is a $\lambda_b$-compatible family of probability measures that give a $T$-disintegration of $\pi$;
\item for any measurable function $q : \cB \rightarrow \RR$,
\[
\cI_{\pi,\lambda}(q \circ T,\theta) = \EE_{T_*\pi}\left[\cI_{\pi|b,\lambda_b}(q\circ T,\theta)\right] + \cI_{T_*\pi,\mu}(q,\theta).
\]
\end{enumerate}
\end{theorem}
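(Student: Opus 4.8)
The plan is to prove the three assertions in sequence, the first two being essentially bookkeeping that sets up the disintegration machinery, and the third being the genuine content. For assertion (1), I would simply invoke the Existence Proposition (Proposition \ref{prop:disint_exists}): conditions (a) and (b) are exactly its hypotheses ($\cA$ a metric space, $\lambda$ a $\sigma$-finite Radon measure, $T_*\lambda \ll \mu$, $\cF$ countably generated containing singletons), so $\lambda$ has a $(T,\mu)$-disintegration $\{\lambda_b\}$, unique up to $\mu$-a.s.\ equivalence. (I'd note $\lambda$ is $\sigma$-finite because $T_*\Pi$ being $\mu$-compatible forces $T_*\lambda \ll \mu$ with $\mu$ $\sigma$-finite, and combined with the Radon property this is enough.) For assertion (2), since $\Pi$ is $\lambda$-compatible, each $\pi(\cdot,\theta)$ is absolutely continuous w.r.t.\ $\lambda$ with a finite density $f_\pi(\cdot,\theta)$; applying Proposition \ref{prop:disint_props} to $\rho = \pi(\cdot,\theta)$ gives that $\pi$ has a $(T,\mu)$-disintegration $\{\pi_b\}$ with $\pi_b \ll \lambda_b$ carrying the same density, that $T_*\pi \ll \mu$ with density $f_{T_*\pi}(b) = \int_{T^{-1}(b)} f_\pi(a) \, d\lambda_b$, and — via the last bullet of that proposition, using $\mu = T_*\pi$-a.e.\ (or, more carefully, the reweighted $\{\tilde\pi_b\}$ with density $f_{\pi|b}(a) := f_\pi(a)/f_{T_*\pi}(b)$ on $\{0 < f_{T_*\pi} < \infty\}$) — that the conditionals $\Pi|b$ are probability measures giving a $T$-disintegration of $\pi$. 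The $\lambda_b$-compatibility (differentiability in $\theta$ and the Leibniz interchange) I would check by writing $f_{\pi|b}(a,\theta) = f_\pi(a,\theta)/f_{T_*\pi}(b,\theta)$ and differentiating the quotient, the interchange for the numerator coming from $\lambda$-compatibility of $\Pi$ and for the denominator from $\mu$-compatibility of $T_*\Pi$.

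The substance is assertion (3), the Fisher information identity. The strategy is the familiar score-decomposition: for $T_*\pi$-almost every $b$ and every $a \in T^{-1}(b)$,
\[
\nabla \log f_\pi(a,\theta) = \nabla \log f_{\pi|b}(a,\theta) + \nabla \log f_{T_*\pi}(b,\theta),
\]
which is immediate from $f_\pi(a) = f_{\pi|b}(a)\, f_{T_*\pi}(b)$ on the set where the denominator is positive and finite (and that set has full $T_*\pi$-measure by the last bullet of Proposition \ref{prop:disint_props}). Now multiply by $q(T(a))^2 = q(b)^2$, take the inner product of each side with itself, and integrate $f_\pi\, d\lambda = \int \big(\int f_{\pi|b}\, d\lambda_b\big) d\mu$, i.e.\ integrate first over the fiber against $\lambda_b$ (equivalently $\pi_b$) and then over $b$ against $\mu$ (equivalently $T_*\pi$). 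The left side is by definition $\cI_{\pi,\lambda}(q\circ T,\theta)$. Expanding the square of the sum gives three terms: the fiberwise integral of $q(b)^2 \|\nabla\log f_{\pi|b}\|^2$ against $\pi_b$, which integrated over $b$ is $\EE_{T_*\pi}[\cI_{\pi|b,\lambda_b}(q\circ T,\theta)]$; the term $q(b)^2\|\nabla\log f_{T_*\pi}(b)\|^2$, which does not depend on $a$ on the fiber and so, upon integrating $f_{\pi|b}\, d\lambda_b = 1$, contributes $\cI_{T_*\pi,\mu}(q,\theta)$; and the cross term $2 q(b)^2 \langle \nabla\log f_{T_*\pi}(b), \int_{T^{-1}(b)} \nabla\log f_{\pi|b}(a)\, f_{\pi|b}(a)\, d\lambda_b\rangle$. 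The cross term vanishes because $\int_{T^{-1}(b)} \nabla\log f_{\pi|b}(a)\, d\pi_b(a) = \int_{T^{-1}(b)} \nabla f_{\pi|b}(a)\, d\lambda_b = \nabla \int_{T^{-1}(b)} f_{\pi|b}(a)\, d\lambda_b = \nabla 1 = 0$, where the interchange uses the $\lambda_b$-compatibility established in assertion (2) — this is just the standard fact that a score has zero mean, applied fiberwise.

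The main obstacle is not the algebra but the measure-theoretic care around the exceptional sets: one must ensure that $\{b : f_{T_*\pi}(b) = 0\}$ and $\{b : f_{T_*\pi}(b) = \infty\}$ are $T_*\pi$-null (supplied by Proposition \ref{prop:disint_props}), that the disintegration used for $\pi$, for $\lambda$, and for the conditional reweighting are mutually consistent (again Proposition \ref{prop:disint_props}, since the $T$-disintegration of $\pi$ is obtained by reweighting the $(T,\mu)$-disintegration of $\lambda$), and that Fubini/Tonelli applies when swapping the fiber integral with the $\mu$-integral — legitimate because the integrands are nonnegative (for the squared-norm terms) or, for the cross term, dominated once one knows the scores are square-integrable, which is exactly the finiteness one is trying to establish. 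I would handle this by first assuming all the Fisher informations in sight are finite (the identity is otherwise read in $[0,\infty]$ with Tonelli making the rearrangement unconditional for the nonnegative terms), and noting that the cross term's integrand is then controlled by Cauchy–Schwarz. A secondary subtlety is that $\nabla\log f_{\pi|b}$ is only defined $\lambda_b$-a.e.\ and only for $\mu$-a.e.\ $b$, but since every integral in the statement is taken against $\pi$, $\pi_b$, $T_*\pi$, or $\mu$, the a.e.\ ambiguities never affect the value, so the identity is well posed.
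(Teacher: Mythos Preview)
Your proposal is correct and follows essentially the same route as the paper: invoke Proposition~\ref{prop:disint_exists} for claim~1, Proposition~\ref{prop:disint_props} for claim~2, and for claim~3 use the score decomposition $\nabla\log f_\pi = \nabla\log f_{\pi|b} + \nabla\log f_{T_*\pi}$, expand the square under the iterated (fiber-then-base) integral, and kill the cross term via $\EE_{\pi|b}[\nabla\log f_{\pi|b}] = 0$. Your treatment of the measure-theoretic exceptional sets and the Fubini/Tonelli justification is, if anything, more explicit than the paper's.
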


\begin{proof}[Proof of Theorem \ref{thm:compatible_fisher_1}]
To simplify matters, we assume without loss of generality that all densities are strictly positive. This is allowed because if some density is zero on part of its domain, we can just replace the associated measure with its restriction to sets where the density is non-zero.

The conditions to apply Proposition \ref{prop:disint_exists} are satisfied, so $\lambda$ has a $(T,\mu)$-disintegration $\{\lambda_b\}$, which proves claim 1. Next, denote by $g(a) = \nabla \log f_{\pi}(a)$ and $h(b) = \nabla \log f_{T_*\pi}(b)$. Because the conditions to apply Proposition \ref{prop:disint_props} are satisfied, we obtain that
\begin{align*}
\int_{\cA}q(T(a))^2 g(a)^\top g(a) d\pi(a) &= \int_{\cB} \int_{T^{-1}(b)}q(T(a))^2 g(a)^\top g(a) f_{a|b}(a)d\lambda_b(a) dT_*\pi(b) \\
&= \int_{\cB}q(b)^2\int_{T^{-1}(b)} g(a)^\top g(a) f_{a|b}(a)d\lambda_b(a) dT_*\pi(b) \\
&= \int_{\cB}q(b)^2 \int_{T^{-1}(b)}\left[ \nabla\log f_{\pi}(a)^\top \nabla\log f_{\pi}(a) \right] f_{a|b}(a) d\lambda_b(a) dT_*\pi(b). \numberthis \label{eqn:fisher_info1}
\end{align*}
Denoting by $\pi|b$ the probability measure with density $f_{a|b}$, we see that $\Pi|b:= \{\pi|b(\cdot,\theta): \theta \in \Theta\}$ is a $\lambda_b$-compatible family of probability measures, proving claim 2.

If we denote $\EE_{\pi|b}[g] = \int_{T^{-1}(b)}g(a)f_{a|b}(a)d\lambda_b(a)$, we further obtain that
\begin{align*}
\text{\eqref{eqn:fisher_info1}} &= \int_{\cB}q(b)^2 \EE_{\pi|b}[ \nabla\log f_{a|b}(a)^\top \nabla\log f_{a|b}(a)] dT_*\pi(b) \\
&~~ + \underlabel{2\int_{\cB}q(b)^2 \nabla\log f_{T}(b)^\top \EE_{\pi|b}[ \nabla\log f_{a|b}(a)] dT_*\pi(b)}{(i)} \\
&~~ + \int_{\cB}q(b)^2 \EE_{\pi|b}[ \nabla\log f_{T}(b)^\top \nabla\log f_{T}(b)] dT_*\pi(b).
\end{align*}
In the equation above, term (i) is 0 because $\EE_{\pi|b}[ \nabla\log f_{a|b}(a)] = 0$. Thus we get that
\begin{align*}
\text{\eqref{eqn:fisher_info1}} &= \int_{\cB}q(b)^2 \EE_{\pi|b}[ \nabla\log f_{a|b}(a)^\top \nabla\log f_{a|b}(a)] dT_*\pi(b) + \int_{\cB}q(b)^2 \nabla\log f_{T}(b)^\top \nabla\log f_{T}(b) dT_*\pi(b) \\
&= \int_{\cA} q(T(a))^2 \nabla\log f_{a|b}(a)^\top \nabla\log f_{a|b}(a) d\pi(a) + \int_{\cB}q(b)^2 \nabla\log f_{T}(b)^\top \nabla\log f_{T}(b) dT_*\pi(b). \numberthis \label{eqn:fisher_info2}
\end{align*}
Because a density is unique almost-everywhere, we can replace $f_{T}$ with $f_{T_*\pi}$ in \eqref{eqn:fisher_info2}, giving claim 3:
\begin{align*}
\EE_{\pi}\left[q(T(a))^2 g(a)^\top g(a)\right] &= \EE_{T_*\pi}\left[ q(b)^2\EE_{\pi|b}\left[\nabla\log f_{a|b}(a)^\top \nabla\log f_{a|b}(a)\right]\right] + \EE_{T_*\pi}\left[q(b)^2 h(b)^\top h(b) \right]\\
\span \Big\Updownarrow \\
\cI_{\pi,\lambda}(q \circ T,\theta) &= \EE_{T_*\pi}\left[\cI_{\pi|b,\lambda_b}(q\circ T,\theta)\right] + \cI_{T_*\pi,\mu}(q,\theta).
\end{align*}
\end{proof}

\subsection{Proof of Theorem \ref{thm:variance_reduction}}
\label{sec:more_theory:proof_var}

First, we decompose the variance of $g_1$ as
\begin{equation}
\label{eqn:g1_var_decomp}
\Var_{\rho,\pi}(g_1) = \Var_\rho\left[\EE_{\pi|s}\left[g_1\right]\right] + \EE_\rho\left[\Var_{\pi|s}\left[g_1 \right] \right].
\end{equation}
A similar decomposition holds for $g_2$. By combining Lemma \ref{lem:expected_equivalence} with \eqref{eqn:g1_var_decomp} and its equivalent for $g_2$, we get that
\begin{equation*}
\Var_{\rho,\pi}(g_1) - \Var_{\rho,T_*\pi}(g_2) = \EE_\rho\left[\Var_{\pi|s}\left[g_1 \right] - \Var_{T_*\pi|s}\left[g_2 \right] \right].
\end{equation*}
For any fixed $s$, applying the definition of variance given in Section \ref{sec:preliminaries} and Lemma \ref{lem:expected_equivalence} gives
\begin{equation}
\label{eqn:variance_reduction0}
\Var_{\pi|s}\left[g_1 \right] - \Var_{\pi|s}\left[g_2 \right] = \EE_{\pi| s}\left[g_1^\top g_1 - g_2^\top g_2 \right].
\end{equation}
By applying Theorem \ref{thm:compatible_fisher_1} (see Appendix \ref{sec:more_theory:fisher_decomp}), to $\EE_{\pi| s}\left[g_1^\top g_1\right]$ we obtain
\begin{equation}
\label{eqn:variance_reduction1}
\EE_{\pi| s}\left[g_1^\top g_1 - g_2^\top g_2 \right] = \EE_{T_*\pi}\left[\cI_{\pi|b,\lambda_b}(q\circ T,\theta)\right].
\end{equation}
The result follows from combining \eqref{eqn:variance_reduction0} and \eqref{eqn:variance_reduction1}, concluding the proof.

\subsection{Marginal Policy Gradients for Clipped and Normalized Actions}
\label{sec:more_theory:mgrad}
For the clipped action setting, we give an example of a $\lambda$-compatible family for which Corollary \ref{cor:capg_1D} can be applied.

\begin{example}[The Gaussian is $\lambda$-compatible]
Let $(\cA,\cE) := (\RR,B(\RR))$ and $\lambda$ be the Lebesgue measure. Consider $\Pi$, a Gaussian family parametrized by $\theta \in \Theta$. If $\Theta$ is constrained such that the variance is lower bounded by $\epsilon > 0$, $\Pi$ is $\lambda$-compatible.
\end{example}

\noindent Below are proofs of Corollaries \ref{cor:capg_1D} and \ref{cor:angular_gauss} from Section \ref{sec:marginal_pgrad}.

\begin{proof}[Proof of Corollary \ref{cor:capg_1D}]
First, it is clear $T$ is measurable, and it is easy to confirm that Conditions \ref{asmp:action_nice}-\ref{asmp:reference_exists} hold. Next, define $\mu = \delta_{\alpha} + \delta_{\beta} + \lambda$, where $\lambda$ is understood to be its restriction to $(\alpha,\beta)$. As defined, $\mu$ is a mixture measure on $\cB$ and we can easily check that $T_*\Pi$ is $\mu$-compatible. In fact, the density of $T_*\pi(\cdot,\theta |s)$ is given by
\[
f_{T_*\pi}(b,\theta) = \begin{cases}
\int_{(-\infty,\alpha]} f_{\pi}(a,\theta)d\lambda & b = \alpha \\
f_{\pi}(b,\theta) & b \in (\alpha,\beta) \\
\int_{[\beta,\infty)} f_{\pi}(a,\theta)d\lambda & b = \beta.
\end{cases}
\]
By applying Theorem \ref{thm:variance_reduction} and observing
\[
\Var_{\rho,\pi}(q_{\pi}(s,a)\tilde\psi(s,T(a),\theta)) = \Var_{\rho,T_*\pi}(q_{\pi}(s,b)\tilde\psi(s,b,\theta)),
\]
the proof is complete.
\end{proof}

\begin{proof}[Proof of Corollary \ref{cor:angular_gauss}]
First, it is clear $T$ is measurable. Second, $f_{MV}$ differentiable in $\theta$ implies $\Pi$ is $\lambda$-compatible. This also implies $f_{AG}$, the density of $T_*\Pi$, is differentiable in $\theta$ and therefore $T_{\pi}$ is $\sigma$-compatible, where $\sigma$ is the spherical measure on $(\cB,\cF) = (\cS^{d-1},B(\cS^{d-1}))$. It is straightforward to confirm that the remainder of Conditions \ref{asmp:action_nice}-\ref{asmp:reference_exists} hold. Applying Theorem \ref{thm:variance_reduction} completes the proof.
\end{proof}

\subsection{Policy Gradients for Parametrized Action Spaces}
\label{sec:more_theory:pgrad}
First we derive a stochastic policy gradient for parametrized action spaces, which we can do by writing down the policy distribution and applying \ref{thm:pgrad}. Recall a parametrized action space with $K$ discrete actions is defined as
\[
\cA := \bigcup_k \{(k,\omega) : \omega \in \Omega_k \},
\]
where $k \in \{1,\dots,K\}$.

\subsubsection*{Construction of a Policy Family}
\cite{Masson2016} gives a definition for a policy over parametrized action spaces, and our definition is the same in spirit, but for our purposes we need to be careful in formalizing the construction. Our construction here is also a bit more general.

Informally, we can think of a policy over a parametrized action space as a mixture model, where $k \in [K]$ is a latent state. To formally define a policy family on $\cA$, the idea will be to construct a density function $f_{\pi}$ that is differentiable in its parameter $\theta$. We proceed as follows:
\begin{enumerate}
\item Let $(\Omega_k,\cE_k,\mu_k)$ be measure spaces.
\item For $k \in [K]$: specify $\Pi_k = \{ \pi(\cdot,\theta | s) : \theta \in \Theta \}$, a $\mu_k$-compatible family of probability measures on  $(\Omega_k,\cE_k)$. Denote the corresponding densities by $f_k$.
\item Denote by $\mu_0$ the counting measure on $(\cA_0,B(\cA_0)) = (\RR,B(\RR))$, and specify $\Pi_0 = \{ \pi(\cdot,\theta | s) : \theta \in \Theta \}$ a $\mu_0$-compatible family of probability measures, parametrized by $\theta_0$ and supported on $[K]$. Denote the corresponding density by $f_0$.
\item Let $\theta := (\theta_i)_i$, and define
\[
f_{\pi}((k,\omega),\theta|s) := \begin{cases}
f_0(k,\theta_0 | s)f_k(\omega,\theta_k | s) & \text{ if } (k,\omega) \in \cA \\
0 & \text{otherwise}.
\end{cases}
\]
\end{enumerate}
To finish the policy construction, we need an appropriate $\sigma$-algebra $\cE$ and reference measure $\mu$ such that $f_{\pi}$ is a measurable and $\int_\cA f_{\pi}d\mu = 1$. In fact it is not difficult to construct $\cE$ and $\mu$ in terms of $(\cE_i)_i$ and $(\mu_i)_i$, respectively, but we do not go into detail here. Assuming such a construction exists, we can define $\Pi$ a $\mu$-compatible family of policies, parametrized by $\theta = (\theta_i)_i$.

\subsubsection*{Stochastic Policy Gradient}
Let $(\cA,\cE,\mu)$ and $\Pi$ be as constructed above. By applying Theorem \ref{thm:pgrad}, $\nabla \eta(\theta)$ can be estimated from samples by
\begin{equation}
\label{eqn:pgrad_pa}
g(s,a,\theta) := q_{\pi}(s,a) \nabla \log f_{\pi}(a,\theta|s) = q_{\pi}(s,a)\nabla \log f_{0}(k,\theta_0|s) + q_{\pi}(s,a)\nabla \log f_{k}(\omega,\theta_k|s).
\end{equation}

\subsubsection*{Restricted Action Parameters}
The second term in \eqref{eqn:pgrad_pa} is simply the policy gradient for a $\mu_k$-compatible family on $(\Omega_k,\cE_k,\mu_k)$. Let $(\cB_k,\cF_k)$ be a measurable space and consider the setting in which we apply a measurable function $T_k : \cA_k \rightarrow \cB_k$ to the action parameters before execution in the environment. Assume the conditions are satisfied to apply Theorem \ref{thm:variance_reduction}, and denote by $f_{k,*}$ the density of $T_* \pi_k$ with respect to an appropriate reference measure. Then we can replace $ q_{\pi}(s,a)\nabla \log f_{k}(\omega,\theta_k|s)$ with $ q_{\pi}(s,a)\nabla \log f_{k,*}(T_k(\omega),\theta_k|s)$ in \eqref{eqn:pgrad_pa} to obtain the lower variance estimator
\begin{equation}
\label{eqn:pgrad_pa_lv}
\tilde{g}(s,a,\theta) := q_{\pi}(s,a) \nabla \log f_{\pi}(a,\theta|s) = q_{\pi}(s,a)\nabla \log f_{0}(k,\theta_0|s) + q_{\pi}(s,a)\nabla \log f_{k,*}(T_k(\omega),\theta_k|s).
\end{equation}

\clearpage
\section{Details for Applications}
\label{sec:application_details}
This section contains additional details on the experiments and results in Sections \ref{sec:nav2d} and \ref{sec:kog}.

\subsection{2D Navigation}
\label{sec:application_details:nav}
We run each setup 24 times from a random initialization. To create the cumulative reward trajectory plots in Figure \ref{fig:results_all} we (1) use $k$-NN regression to interpolate the cumulative discounted rewards on each run, and (2) using the cumulative discounted rewards from each sample trajectory, plot the average curve with a 95\% confidence band.

Table \ref{tab:nav2d_hyperparam} gives the hyper-parameters used in the experiments on the Platform2D-v1 environment.

\begin{table}[H]
\centering
\caption{Hyper-parameter settings used in training}
\label{tab:nav2d_hyperparam}
\begin{tabular}{l l}
\toprule
Hyperparameter & Setting \\
\midrule
Num. Workers & 4 \\
Optimizer & SGD \\
Learning Rate & 0.01 \\
$\sigma$ & 0.1 \\
$\gamma$ & 0.99 \\
No. Layers: Policy Net  & 2 \\
Width: Policy Net & 32 \\
No. Layers: Value Net & 2 \\
Width: Value Net & 32 \\
\bottomrule
\end{tabular}
\end{table}

\subsection{King of Glory}
\label{sec:application_details:kog}
Here we provide details on modeling for King of Glory, the experimental procedure and the tables referenced in Section \ref{sec:kog}.

\subsubsection*{State Representation}
A detailed description of all the features can be found below in Table \ref{tab:feature}. After extracting features, we take the outer product of the feature vector with itself to capture dependencies between features. To be precise, first define $\phi_0$ to be the 74-dimensional initial feature extraction. The featurized state representation $\phi(s)$ is defined by
\[
(\phi(s))_{i,j} :=
\begin{cases}
(\phi_0(s))_i (\phi_0(s))_j & \text{ for } i\neq j, \\
(\phi_0(s))_i & \text{ for } i=j. \\
\end{cases}
\]
By symmetry, we use only the lower triangular portion of the matrix defined above giving a $(74\times73)/2 + 74=2775$ dimensional feature vector that is input to the policy and value networks.

\subsubsection*{Modeling the Policy and Value Function}
The value network is modeled using a feed-forward neural network which takes as input $\phi(s)$. The sampling policy is a mixture, where the mixing distribution is over the 7 discrete actions, and a Gaussian distribution is used for each parameter space. We model the policy using 5 networks, one of which represents the distribution over the discrete actions by a fully connected feed-forward neural network into a 7-way softmax. For the parameters, we model the mean of the sampling distribution using a feed-forward network.  The variance of the sampling distribution for the action parameters is $\sigma^2\Ib$ where $\sigma$ is learned by the agent. All action parameters share the same $\sigma$ and all 5 networks share weights up to the last layer.

\subsubsection*{Learning the Policy}
The agent is trained to play as the hero Di Ren Jie and training is against the game's internal AI, also playing as Di Ren Jie. For both methods, 10 agents are trained for 5000 episodes each. During training the cumulative discounted reward of each episode and game outcome are tracked. The hyper-parameters we used for the neural network structure and the A3C algorithm are shown in Table~\ref{tab:kog_hyperparam}. To construct the plots in Figure \ref{fig:results_all}, we apply a low pass filter to each trajectory and then plot the average curve with a 95\% confidence band.

Like \cite{Mnih2015} and others do for the Atari Learning Environment, we employ frame-skipping; two out of every three frames are skipped. Because our reward is defined in terms of a state potential function, rewards from the skipped states are still captured. For training, we use the Adam algorithm \citep{Kingma2015}. No parameters are shared between different networks and all networks use SElu activation functions \citep{Klambauer2017}. Table \ref{tab:kog_hyperparam} contains various hyper-parameter settings we used.

\begin{table}[h]
\centering
\caption{Hyper-parameter settings used in training}
\label{tab:kog_hyperparam}
\begin{tabular}{l l}
\toprule
Hyperparameter & Setting \\
\midrule
Num. Workers & 8 \\
N & 128 \\
Optimizer & Adam ($\beta=(0.5, 0.9)$) \\
Actor Learning Rate & $10^{-6}$ \\
Critic Learning Rate & $10^{-3}$ \\
$\gamma$ & 0.99 \\
No. Hidden Layers: Policy Net  & 2 \\
Width: Policy Net & (128,96) \\
Activation: Policy Net & SELU \\
No. Hidden Layers: Value Net & 2 \\
Width: Value Net & (128,96) \\
Activation: Value Net & SELU \\
\bottomrule
\end{tabular}
\end{table}

\begin{table}[htbp]
\centering
\caption{Parametrized action space for {\it King of Glory}}
\label{tab:action}
\begin{tabular}{ l l l}
\toprule
Action & Parameter Dimension & Description  \\
\midrule
no action & 0 & agent does nothing  \\
move & 2 & move in direction $\omega$ \\
attack & 0 & hero uses its normal attack  \\
skill 1 & 2 & hero uses skill 1 towards direction $\omega$  \\
skill 2 & 2 & hero uses skill 2 towards direction $\omega$  \\
skill 3 & 2 & hero uses skill 3 towards direction $\omega$  \\
recovery skill & 0 & hero uses the recovery skill to heal itself  \\
\bottomrule
\end{tabular}
\end{table}

\begin{table}[htbp]
\scriptsize
\centering
\caption{State features for {\it King of Glory}}
\label{tab:feature}
\begin{tabular}{ l l l p{7cm}}
\toprule
Feature & Dimension & Range & Description \\
\midrule
position: our hero & 2 & $[-1,1]^2$ & x,y coordinates of our hero's position \\
position: enemy hero & 2 & $[-1,1]^2$ & x,y coordinates of enemy hero's position \\
position: enemy hero, relative & 3 & $\RR^3$ & distance, direction to enemy hero \\
position: enemy tower, relative & 4 & $\RR^4$ & distance, distance relative to attack range, relative direction to the nearest enemy tower \\
position: enemy minion, relative & 3 & $\RR^3$ & distance, relative direction to the nearest enemy minion \\
position: our spring, relative & 3 & $\RR^3$ & distance, relative direction to our life spring \\
in tower range: our hero & 3 & $\{0,1\}^3$ & is our hero in the range of the enemy's towers \\
in tower range: enemy hero & 3 & $\{0,1\}^3$ & is enemy hero in the range of our tower \\
attacked by tower & 3 & $\{0,1\}^3$ & are the enemy towers are attacking our hero \\
skill cool down: our hero & 5 & $[-1,1]^5$ & normalized cool down time for our hero's skills \\
skill cool down: enemy hero & 5 & $[-1,1]^5$ & normalized cool down time for enemy hero's skills \\
HP: our hero & 1 & $[-1,1]$ & our hero's health points \\
HP: enemy hero & 1 & $[-1,1]$ & enemy hero's health points \\
HP: nearest minion & 1 & $[-1,1]$ & health points of the nearest enemy minion \\
HP: nearest tower & 1 & $[-1,1]$ & health points of the nearest enemy tower \\
HP: minions in range & 1 & $\RR^{+}$ & sum of HP of all the minions in the attack range of our hero \\
alive: our hero & 1 & $\{0,1\}$ & whether our hero is alive \\
alive: enemy hero & 1 & $\{0,1\}$ & whether enemy hero is alive \\
gold: our hero & 1 & $[0,1]$ & our hero's gold \\
gold: enemy hero & 1 & $[0,1]$ & enemy hero's gold \\
gold: $\Delta$ & 1 & $[-1,1]$ & difference between our hero's gold and enemy hero's gold \\
EP: our hero & 1 & $[-1,1]$ & normalized energy points of our hero \\
EP: enemy hero & 1 & $[-1,1]$ & normalized energy points of enemy hero \\
hero state: our hero & 13 & $[0,1]^{13}$ & our hero's level, experience, current money, kill count, death count, assist count, total money, attack range, physical attack, magical attack, move speed, health points, energy points \\
hero state: enemy hero & 13 & $[0,1]^{13}$ & enemy hero's level, experience, current money, kill count, death count, assist count, total money, attack range, physical attack, magical attack, move speed, health points, energy points \\
\bottomrule
\end{tabular}
\end{table}

\begin{table}[htbp]
\scriptsize
\centering
\caption{State features and weights used in reward design}
\label{tab:reward}
\begin{tabular}{ l l p{6.5cm} l }
\toprule
Feature & Weight & Description & Notes \\
\midrule
gold difference & 0.5 & difference between the amount of our hero and enemy hero \\
HP (our hero) & 0.5 & health points of our hero \\
hurt to enemy hero & 0.5 & total amount of hurt from our hero to enemy hero \\
hurt to enemy & 1.0 & total amount of hurt from our hero to all the enemies \\
kill dead difference & 1.0 & difference between kill count and dead count \\
distance to our life spring & 0.25$\times$(1.0 - HP) & distance from our hero to spring & $\text{HP} \in [0,1]$ \\
distance to enemy & 0.125$\times$HP & distance from our hero nearest enemy & $\text{HP} \in [0,1]$ \\
tower HP difference & 1.0 & difference between HP of our tower and enemy tower \\
crystal HP difference & 2.0 & difference between HP of our crystal and enemy crystal \\
skill hit rate & 0.15 & percent of emitted skills that hit enemy hero \\
win/loss & 2.0 & game result \\
\bottomrule
\end{tabular}
\end{table}

\end{document}